\documentclass[10pt,letter]{article}
\usepackage[margin=3cm]{geometry}

\usepackage{graphicx}
\usepackage{graphics}
\usepackage{amsthm}
\usepackage{amsmath}
\usepackage{url}
\usepackage{bm}
\usepackage{subfigure}
\usepackage[ruled]{algorithm2e}
\usepackage{color}
\usepackage{multirow}

\newtheorem{theorem}{Theorem}[section]
\newtheorem{lemma}[theorem]{Lemma}

\newcommand{\ba}{\begin{array}}
\newcommand{\ea}{\end{array}}
\newcommand{\beq}{\begin{equation}}
\newcommand{\eeq}{\end{equation}}
\newcommand{\beqa}{\begin{eqnarray}}
\newcommand{\eeqa}{\end{eqnarray}}
\newcommand{\beqas}{\begin{eqnarray*}}
\newcommand{\eeqas}{\end{eqnarray*}}
\newcommand{\bi}{\begin{itemize}}
\newcommand{\ei}{\end{itemize}}
\newcommand{\gap}{\hspace*{2em}}

\newcommand{\nn}{\nonumber}

\def\eqref#1{(\ref{#1})}

\def\bfLambda{{\mathbf \Lambda}}

\def\bfS{{\bf S}}
\def\bfU{{\bf U}}
\def\bfX{{\bf X}}
\def\bfY{{\bf Y}}

\def\cL{{\cal L}}

\def\cS{{\cal S}}

\def\cU{{\cal U}}

\def\diag{{\rm diag}}
\def\Diag{{\rm Diag}}

\def\htheta{{\widehat {\mathbf \Theta}}}
\def\bfTheta{{\mathbf \Theta}}
\def\hw{{\widehat\mathbf{W}}}

\def\tr{{\rm tr}}

\def\vec{{\rm vec}}

\def\bI{{\bar I}}
\def\bJ{{\bar J}}
\def\ext{{\rm ext}}

\SetKwRepeat{doWhile}{for $j=1,2,\dots,p,1,2,\dots,p,\dots$}{Until}

\title{ Fused Multiple Graphical Lasso}

\author{
Sen Yang\thanks{School of Computing, Informatics, and Decision Systems Engineering, Arizona State University, Tempe, Arizona 85287, USA ({senyang@asu.edu, peter.wonka@asu.edu, jieping.ye@asu.edu}).},~  
Zhaosong Lu\thanks{Department of Mathematics, Simon Frasor University, Burnaby, BC, V5A 156, Canada (zhaosong@sfu.ca).},~   
Xiaotong Shen\thanks{School of Statistics, University of Minnesota, Minneapolis, Minnesota 55455, USA (xshen@umn.edu).},~ 
Peter Wonka\footnotemark[1],~ 
and Jieping Ye\footnotemark[1]
}
\begin{document}

\maketitle

\begin{abstract}
In this paper, we consider the problem of estimating multiple graphical models simultaneously using the fused lasso penalty, which  encourages adjacent graphs to share similar structures. A motivating example is the analysis of brain networks of Alzheimer's disease using neuroimaging data. Specifically, we may wish to estimate a brain network for the normal controls (NC), a brain network for the patients with mild cognitive impairment (MCI), and a brain network for Alzheimer's patients (AD). We expect the two brain networks for NC and MCI to share common structures but not to be identical to each other; similarly for the two brain networks for MCI and AD. The proposed formulation can be solved using a second-order method. Our key technical contribution is to establish the necessary and sufficient condition for the graphs to be decomposable. Based on this key property, a simple screening rule is presented, which decomposes the large graphs into small subgraphs and allows an efficient estimation of multiple independent (small) subgraphs, dramatically reducing the computational cost. We perform experiments on both synthetic and real data; our results demonstrate the effectiveness and efficiency of the proposed approach.
\end{abstract}
\section{Introduction}
Undirected graphical models explore the relationships among a set of random variables through their joint distribution. The estimation of undirected graphical models has applications in many domains, such as computer vision, biology, and medicine \cite{guo2011joint,huang2009learning,yang2012feature}. One instance is the analysis of gene expression data. {As shown in many biological studies, genes tend to work in groups based on their biological functions, and there exist some regulatory relationships between genes~\cite{chuang2007network}}. Such biological knowledge can be represented as a graph, where nodes are the genes, and edges describe the regulatory relationships. Graphical models provide a useful tool for modeling these relationships, and can be used to explore gene activities. One of the most widely used graphical models is the Gaussian graphical model (GGM), which assumes the variables to be Gaussian distributed~\cite{banerjee2008model,yuan2007model}. In the framework of GGM, the problem of learning a graph is equivalent to estimating the inverse of the covariance matrix (precision matrix), since the nonzero off-diagonal elements of the precision matrix represent edges in the graph~\cite{banerjee2008model,yuan2007model}.

In recent years many research efforts have focused on estimating the precision matrix and the corresponding graphical model (see, for example \cite{banerjee2008model,GLasso,hsieh2011sparse,huang2009learning,li2010inexact,
liu2010stability,lu2009smooth,lu2010adaptive,mazumder2012graphical,meinshausen2006high,olsen2012newton,yuan2007model}. Meinshausen and B\"{u}hlmann \cite{meinshausen2006high} estimated edges for each node in the graph by fitting a lasso problem \cite{lasso} using the remaining variables as predictors. Yuan and Lin \cite {yuan2007model} and Banerjee et al. \cite{banerjee2008model} proposed a penalized maximum likelihood model using $\ell_1$ regularization to estimate the sparse precision matrix. Numerous methods have been developed for solving this model.  For example,  d'Aspremont et al.~\cite{d2008first} and Lu \cite{lu2009smooth,lu2010adaptive} studied Nesterov's smooth gradient
methods \cite{nesterov2005smooth} for solving this problem or its dual.  Banerjee et al.~\cite{banerjee2008model} and Friedman et al.~\cite{GLasso}  proposed  block coordinate ascent methods for solving the dual problem. The latter method \cite{GLasso} is widely referred to as Graphical lasso (GLasso). Mazumder and Hastie \cite{mazumder2012graphical} proposed a new algorithm called DP-GLasso, each step of which is a box-constrained QP problem. Scheinberg and Rish \cite{scheinberg2009sinco} proposed a coordinate descent method for solving this model in a greedy approach. Yuan \cite{yuan2012alternating} and Scheinberg et al.~\cite{scheinberg2010sparse} applied alternating direction method of multipliers (ADMM) \cite{boyd2011distributed} to this problem. {Li and Toh \cite{li2010inexact} and Yuan and Lin \cite{yuan2007model} proposed to solve this problem using interior point methods.} Wang et al. \cite{wang2010solving}, Hsieh et al.~\cite{hsieh2011sparse}, Olsen et al.~\cite{olsen2012newton}, and Dinh et al.~\cite{dinh2013proximal} studied Newton method for solving
this model. 
The main challenge of estimating a sparse precision matrix for the problems with a large number of nodes (variables) is its intensive computation. Witten et al.~\cite{witten2011new} and Mazumder and Hastie~\cite{mazumder2012exact} independently derived {a necessary and sufficient condition for the solution of a single graphical lasso to be block diagonal {(subject to some rearrangement of variables)}. This can be used as a simple screening test to identify the associated blocks, and the original problem can thus be decomposed into a group of smaller sized but independent problems corresponding to these blocks. When the number of blocks is large, it can achieve massive computational gain.} However, these formulations assume that observations are independently drawn from a single Gaussian distribution. In many applications the observations may be drawn from multiple Gaussian distributions; in this case, multiple graphical models need to be estimated.

There are some recent works on the estimation of multiple precision matrices~\cite{Danaher2011joint,guo2011joint,hara2011common,honorio2010multi,kolar2010estimating,kolar2011time,mohan2012structured,zhou2008time}. Guo et al.~\cite{guo2011joint} proposed a method to jointly estimate multiple graphical models using a hierarchical penalty. However, their model is not convex. Honorio and Samaras \cite{honorio2010multi} proposed a convex formulation to estimate multiple graphical models using the $\ell_{1,\infty}$ regularizer. Hara and Washio \cite{hara2011common} introduced a method to learn common substructures among multiple graphical models. Danaher et al.~\cite{Danaher2011joint} estimated multiple precision matrices simultaneously using a pairwise fused penalty and grouping penalty. ADMM was used to solve the problem, but it requires computing multiple eigen decompositions at each iteration. {Mohan et al. \cite{mohan2012structured} proposed to estimate multiple precision matrices based on the assumption that the network differences are generated from node perturbations. Compared with single graphical model learning, learning multiple precision matrices jointly is even more challenging to solve. Recently, a necessary and sufficient condition for multiple graphs to be decomposable was proposed in~\cite{Danaher2011joint}. However, such necessary and sufficient condition was restricted to two graphs only when the fused penalty is used. It is not clear whether this screening rule can be extended to the more general case with more than two graphs, which is the case in brain network modeling.}

{There are two types of fused penalties that can be used for estimating multiple (more than two)
graphs: (a) pairwise fused or (b) sequential fused~\cite{tibshirani2005sparsity}. In this paper we set out to address the sequential fused case first, because we work on practical applications that can be more appropriately
formulated using the sequential formulation. Specifically, }
we consider the problem of estimating multiple graphical models by maximizing a penalized log likelihood with $\ell_1$ and sequential fused regularization. The $\ell_1$ regularization yields a sparse solution, and the fused regularization encourages adjacent graphs to be similar. The graphs considered in this paper have a natural order, which
is common in many applications. A motivating example is the modeling of brain networks for Alzheimer's disease using neuroimaging data such as Positron emission tomography (PET). In this case, we want to estimate graphical models for three groups: normal controls (NC), patients of mild cognitive impairment (MCI), and Alzheimer's patients (AD). These networks are expected to share some common connections, but they are not identical. Furthermore, the networks are expected to evolve over time, in the order of disease progression from NC to MCI to AD. Estimating the graphical models separately fails to exploit the common structures among them. It is thus desirable to jointly estimate the three networks (graphs).
Our key technical contribution is to establish the necessary and sufficient condition for the solution of the fused multiple graphical lasso (FMGL) to be block diagonal. The duality theory and several other tools in linear programming are used to drive
the necessary and sufficient condition. Based on this crucial property of FMGL, we develop a screening rule
which enables the efficient estimation of large multiple precision matrices for FMGL. The proposed screening rule can be combined with any algorithms to reduce computational cost. We employ {a second-order method \cite{hsieh2011sparse,lee2012proximal,tseng2009coordinate}} to solve the fused multiple graphical lasso, where each step is solved by the spectral projected gradient method~\cite{lu2011augmented,wright2009sparse}. In addition, we propose a shrinking scheme to identify the variables to be updated in each step of the second-order method, which reduces the computation cost of each step. We conduct experiments on both synthetic and real data; our results demonstrate the effectiveness and efficiency of the proposed approach.

\subsection{Notation} \label{notation}

In this paper, $\Re$ stands for the set of all real numbers, $\Re^n$ denotes the $n$-dimensional Euclidean space, and the set of all $m \times n$ matrices with real entries is denoted by
$\Re^{m \times n}$.  All matrices are presented in bold format. The space of  symmetric matrices is denoted by $\cS^n$. If $\bfX \in \cS^n$ is positive semidefinite (resp.\ definite), we write $\bfX \succeq 0$ (resp.\ $\bfX \succ 0$).  Also, we write $\bfX \succeq \bfY$ to mean $\bfX-\bfY \succeq 0$. The cone of positive semidefinite matrices in $\cS^n$ is denoted by $\cS^n_+$.
Given matrices $\bfX$ and $\bfY$ in $\Re^{m \times n}$, the standard inner product is defined by $\langle \bfX, \bfY \rangle := \tr (\bfX \bfY^T)$, where $\tr(\cdot)$ denotes the trace of a matrix. $\bfX \circ \bfY$ and $\bfX\otimes\bfY$ means the Hadamard and Kronecker product of $\bfX$ and $\bfY$, respectively. We denote the identity
matrix by $\bf I$, whose dimension should be clear from the context. The determinant and the minimal eigenvalue
of a real symmetric matrix $\bfX$ are denoted by $\det(\bfX)$ and $\lambda_{\min}(\bfX)$,
respectively. Given a matrix $\bfX \in \Re^{n \times n}$, $\diag(\bfX)$ denotes the vector formed by
the diagonal of $\bfX$, that is, $\diag(\bfX)_{i}=\bfX_{ii}$ for $i=1,\ldots,n$. $\Diag(\bfX)$ is the
diagonal matrix which shares the same diagonal as $\bfX$. $\vec(\bfX)$ is the vectorization of $\bfX$. In addition, $\bfX >0$ means that all entries
of $\bfX$ are positive.

The rest of the paper is organized as follows. We introduce the fused multiple graphical lasso formulation in Section~\ref{sec:fusedMultiGLasso}. The screening rule is presented in Section~\ref{sec:screening}. The proposed second-order method is presented in Section~\ref{sec:Newtonmethod}. The experimental results are shown in Section~\ref{sec:expR}. We conclude the paper in Section~\ref{sec:discussion}.
\section{Fused multiple graphical lasso}\label{sec:fusedMultiGLasso}
Assume we are given $K$ data sets, $x^{(k)}\in \mathcal{R}^{n_k\times p},~k=1,\dots,K$ with $K\geq 2$, where $n_k$ is the number of samples, and $p$ is the number of features. The $p$ features are common for all $K$ data sets, and all $\sum_{k=1}^Kn_k$ samples are independent. Furthermore, the samples within each data set $x^{(k)}$ are identically distributed with a $p$-variate Gaussian distribution with zero mean and positive definite covariance matrix $\mathbf{\Sigma}^{(k)}$, and there are many conditionally independent pairs of features, i.e., the precision matrix $\mathbf{\Theta}^{(k)} = (\mathbf{\Sigma}^{(k)})^{-1}$ should be sparse. For notational simplicity, we assume that $n_{1}=\dots = n_K = n$. Denote the sample covariance matrix for each data set $x^{(k)}$ as $\mathbf{S}^{(k)}$ with $\mathbf{S}^{(k)}=\frac{1}{n}(x^{(k)})^Tx^{(k)}$, and $\mathbf{\Theta}=(\mathbf{\Theta}^{(1)},\dots,\mathbf{\Theta}^{(K)})$.
Then the negative log likelihood for the data takes the form of
\begin{equation}\label{eq:loglikelihood}
  \sum_{k=1}^K\left( - \log \det({\mathbf{\Theta}^{(k)}})+\tr(\mathbf{S}^{(k)}{\mathbf{\Theta}^{(k)}})\right).
\end{equation}
Clearly, minimizing \eqref{eq:loglikelihood} leads to the maximum likelihood estimate (MLE) $\mathbf{\hat \Theta}^{(k)} = (\mathbf{S}^{(k)})^{-1}$.
{However, the MLE fails when $\mathbf{S}^{(k)}$ is singular. Furthermore, the MLE is usually dense. }The $\ell_1$ regularization has been employed to induce sparsity, resulting in the sparse inverse covariance estimation~\cite{banerjee2008model,GLasso,yuan2006model}. In this paper, we employ both the $\ell_1$ regularization and the fused regularization for simultaneously estimating multiple graphs. The $\ell_1$ regularization leads to a sparse solution, and the fused penalty encourages $\mathbf{\Theta}^{(k)}$ to be similar to its neighbors. Mathematically, we solve the following formulation:
\begin{equation}\label{eq:fusedloglikelihood}
\min_{\mathbf{\Theta}^{(k)}\succ 0, k=1\dots K} \sum_{k=1}^K\left( - \log \det({\mathbf{\Theta}^{(k)}})+\tr(\mathbf{S}^{(k)}{\mathbf{\Theta}^{(k)}})\right) + P({\mathbf{\Theta}}),
\end{equation}
where
\[
P({\mathbf \Theta}) = \lambda_1 \sum_{k=1}^K\sum_{i\neq j}|{\mathbf \Theta}_{ij}^{(k)}| + \lambda_2 \sum_{k=1}^{K-1}\sum_{i\neq j}|{\mathbf \Theta}_{ij}^{(k)} - {\mathbf \Theta}_{ij}^{(k+1)}|,
\]
$\lambda_1>0$ and $\lambda_2>0$ are positive regularization parameters. This model is referred to as  the fused multiple graphical lasso (FMGL).

To ensure the existence of a solution for problem \eqref{eq:fusedloglikelihood}, we assume throughout this paper that $\diag(\mathbf{S}^{(k)})>0, k=1,\dots, K$. Recall that $\mathbf{S}^{(k)}$ is a sample covariance matrix, and hence $\diag(\mathbf{S}^{(k)})\geq 0$. The diagonal entries may be not, however, strictly positive. But we can always add a small perturbation (say $10^{-8}$) to ensure the above assumption holds.

The following theorem shows that under this assumption the FMGL \eqref{eq:fusedloglikelihood} has a unique solution.

\begin{theorem}\label{thm:coord_desc_thm}
{Under the assumption that $\diag(\mathbf{S}^{(k)})>0, k=1,\dots, K$, problem \eqref{eq:fusedloglikelihood} has a unique optimal solution.}
\end{theorem}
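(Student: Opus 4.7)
The plan is to separate uniqueness and existence. Strict convexity of $F$ is immediate: $-\log\det$ is strictly convex on $\Snpp$, the trace terms are linear, and $P$ is convex, so the objective is strictly convex on $(\Snpp)^K$ and any minimizer is automatically unique. It remains to prove existence.

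For existence, I would show that every sublevel set $\{\mathbf{\Theta}:F(\mathbf{\Theta})\le c\}$ is compact in $(\Snpp)^K$ by a recession-type argument. Fix a sequence $\{\mathbf{\Theta}_m\}\subset(\Snpp)^K$ with $F(\mathbf{\Theta}_m)\le c$ and set $t_m=\max_k\|\mathbf{\Theta}_m^{(k)}\|_F$. To rule out $t_m\to\infty$, pass to a subsequence along which $\mathbf{\Theta}_m^{(k)}/t_m\to\mathbf{D}^{(k)}$ for each $k$; as a limit of PD matrices each $\mathbf{D}^{(k)}$ is PSD, not all of the $\mathbf{D}^{(k)}$ vanish, and $\|\mathbf{D}^{(k)}\|_F\le 1$. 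The key estimate is that $-\log\det$ is only logarithmic in $t_m$: each eigenvalue of $\mathbf{\Theta}_m^{(k)}$ is at most $t_m$, so $-\log\det(\mathbf{\Theta}_m^{(k)})\ge -p\log t_m$, while $\tr(\mathbf{S}^{(k)}\mathbf{\Theta}_m^{(k)})\ge 0$ and $P\ge 0$ combined with $F(\mathbf{\Theta}_m)\le c$ give $\sum_k -\log\det(\mathbf{\Theta}_m^{(k)})\le c$; together these force $-\log\det(\mathbf{\Theta}_m^{(k)})/t_m\to 0$ for every $k$. Dividing $F(\mathbf{\Theta}_m)$ by $t_m$ and using positive homogeneity of the trace and of $P$ yields, in the limit,
\[
0 \ge \limsup_{m\to\infty}\frac{F(\mathbf{\Theta}_m)}{t_m} = \sum_{k=1}^K \tr(\mathbf{S}^{(k)}\mathbf{D}^{(k)}) + P(\mathbf{D}),
\]
and both summands on the right are nonnegative, hence each vanishes. $P(\mathbf{D})=0$ forces every off-diagonal entry of every $\mathbf{D}^{(k)}$ to be zero, so each $\mathbf{D}^{(k)}$ is diagonal with nonnegative entries; then $\tr(\mathbf{S}^{(k)}\mathbf{D}^{(k)})=\sum_i S_{ii}^{(k)}D_{ii}^{(k)}=0$ together with $S_{ii}^{(k)}>0$ gives $\mathbf{D}^{(k)}=0$ for every $k$, contradicting $\mathbf{D}\ne 0$. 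So $\{\mathbf{\Theta}_m\}$ is bounded.

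With $\{\mathbf{\Theta}_m\}$ bounded, extract a convergent subsequence to $\mathbf{\Theta}_\star\in(\Snp)^K$. Singularity of any component would send the corresponding $-\log\det$ to $+\infty$, violating $F(\mathbf{\Theta}_m)\le c$, so $\mathbf{\Theta}_\star\in(\Snpp)^K$ and the sublevel set is compact. Continuity of $F$ and Weierstrass then deliver a minimizer, and strict convexity makes it unique. I expect the main obstacle to be the recession step: because $P$ penalizes only off-diagonal entries, no part of $P$ by itself controls diagonal growth, and the calculation above is what makes precise the intuition that positivity of $\diag(\mathbf{S}^{(k)})$ is exactly the hypothesis needed to rule out every residual PSD recession direction left uncontrolled by $P$.
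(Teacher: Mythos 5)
Your proof is correct, and it takes a genuinely different route from the paper's. The paper proves Theorem~\ref{thm:coord_desc_thm} by reduction to a single-graph lemma (Lemma~\ref{single-graph}): existence for one graphical lasso is obtained through the dual problem $\max_{\bfU \in \cU}\{\log\det(\bfS+\bfLambda\circ\bfU): \bfS+\bfLambda\circ\bfU \succ 0\}$, whose feasibility is verified by a Lagrangian minimax computation and whose optimal solution yields the primal solution $(\bfS+\bfLambda\circ\bfU^*)^{-1}$ via the dual optimality conditions; compactness of single-graph sublevel sets is then proved entrywise, bounding off-diagonal entries by splitting the $\ell_1$ penalty in half and bounding diagonal entries via the scalar inequality $-\log x + ax \ge 1+\log a$. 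The multi-graph sublevel set is finally trapped inside a product $\Omega_1\times\cdots\times\Omega_K$ of compact single-graph sublevel sets (using nonnegativity of the fused penalty and a uniform lower bound $\delta$ on each single-graph objective), and Weierstrass plus strict convexity finish. You instead run a recession argument directly on the $K$-graph objective: the normalization $t_m=\max_k\|\mathbf{\Theta}_m^{(k)}\|_F$, the bound $\lambda_{\max}(\mathbf{\Theta}_m^{(k)})\le t_m$ giving $-\log\det(\mathbf{\Theta}_m^{(k)})\ge -p\log t_m$, and the sum bound $\sum_k -\log\det(\mathbf{\Theta}_m^{(k)})\le c$ together make each $\log\det$ term $O(\log t_m)$ in absolute value, hence negligible at scale $t_m$, so any nonzero recession direction $\mathbf{D}$ must satisfy $\sum_k\tr(\mathbf{S}^{(k)}\mathbf{D}^{(k)})+P(\mathbf{D})=0$, which $\lambda_1>0$ (killing off-diagonals) and $\diag(\mathbf{S}^{(k)})>0$ (killing the remaining nonnegative diagonals) jointly forbid; your closedness step is also sound, since a singular limit block forces $F\to+\infty$ because the other $\log\det$ terms stay bounded below on a bounded set. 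What each approach buys: yours is more self-contained and elementary (no duality, no auxiliary lemma, all $K$ graphs at once) and it isolates exactly why $\diag(\mathbf{S}^{(k)})>0$ is the right hypothesis, namely that $P$ is blind to diagonal recession directions; the paper's route is heavier but yields reusable by-products, since Lemma~\ref{single-graph} holds under the weaker per-graph condition $\Diag(\bfS)+\bfLambda>0$ (diagonal penalties may compensate vanishing diagonal entries of $\bfS$) and its dual construction supplies an explicit optimality certificate of independent interest. Two cosmetic remarks, neither a gap: your displayed $\limsup$ is really a limit along the subsequence on which $\mathbf{\Theta}_m^{(k)}/t_m\to\mathbf{D}^{(k)}$, and strict convexity of the full objective merits the one-line observation that the $-\log\det$ terms decouple across blocks, so two distinct points differ in some block along which the sum is strictly convex.
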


To prove Theorem \ref{thm:coord_desc_thm}, we first establish a technical lemma which regards the existence of a solution for a standard graphical lasso problem.

\begin{lemma} \label{single-graph}
Let $\mathbf{S} \in \cS^p_+$ and $\bfLambda \in \cS^p$ be such that $\Diag(\bfS)+\bfLambda>0$ and $\diag(\bfLambda)\geq 0$.
Consider the problem
\beq \label{logdet}
\min\limits_{\mathbf{X} \succ 0} \underbrace{-\log\det(\mathbf{X}) + \tr(\mathbf{S}\mathbf{X}) + \sum_{ij}\bfLambda_{ij}  |\bfX_{ij}|}_{f(\bfX)}.
\eeq
Then the following statements hold:
\bi
\item[(a)] Problem~\eqref{logdet}  has a unique optimal solution;
\item[(b)] The sub-level set $\cL = \{\bfX \succ 0: f(\bfX) \le \alpha\}$ is compact for any $\alpha \ge f^*$, where $f^*$ is the optimal value of ~\eqref{logdet}.
\ei
\end{lemma}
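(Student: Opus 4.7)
My strategy is to establish (b) first and then read off (a) as a short consequence. Note that $f$ is strictly convex on $\cS^p_{++}$: the term $-\log\det$ is strictly convex there, $\tr(\bfS\bfX)$ is linear, and $\sum_{ij}\bfLambda_{ij}|\bfX_{ij}|$ is convex since every coefficient is non-negative ($\bfLambda_{ij}>0$ for $i\neq j$ by $\Diag(\bfS)+\bfLambda>0$, and $\bfLambda_{ii}\geq 0$ by hypothesis). Given (b), picking any $\bfX_0\succ 0$ and taking $\alpha=f(\bfX_0)$ gives a nonempty compact sublevel set on which the continuous $f$ attains its minimum (Weierstrass), and strict convexity forces that minimizer to be unique — yielding (a).

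\textbf{Closedness of $\cL$.} Let $\bfX_n\in\cL$ with $\bfX_n\to\bfX$ in $\cS^p$. Since $\cS^p_+$ is closed, $\bfX\succeq 0$. If $\det(\bfX)=0$, then $-\log\det(\bfX_n)\to+\infty$ while $\tr(\bfS\bfX_n)+\sum_{ij}\bfLambda_{ij}|(\bfX_n)_{ij}|$ is continuous and stays bounded, so $f(\bfX_n)\to+\infty$, contradicting $f(\bfX_n)\leq\alpha$. Hence $\bfX\succ 0$ and by continuity $f(\bfX)\leq\alpha$, i.e.\ $\bfX\in\cL$.

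\textbf{Boundedness of $\cL$ (the main obstacle).} This is where the assumptions must be used. Suppose for contradiction there exist $\bfX_n\in\cL$ with $M_n:=\|\bfX_n\|_F\to\infty$, and set $\hat\bfX_n:=\bfX_n/M_n$. Along a subsequence, $\hat\bfX_n\to\hat\bfX\succeq 0$ with $\|\hat\bfX\|_F=1$. I would decompose
$$f(\bfX_n) = -p\log M_n - \log\det(\hat\bfX_n) + M_n\Big[\tr(\bfS\hat\bfX_n) + \sum_{ij}\bfLambda_{ij}|(\hat\bfX_n)_{ij}|\Big].$$
Every eigenvalue of $\hat\bfX_n$ lies in $(0,1]$ (since $\|\hat\bfX_n\|_F=1$), so $-\log\det(\hat\bfX_n)\geq 0$. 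The bracket converges to $g(\hat\bfX):=\tr(\bfS\hat\bfX)+\sum_{ij}\bfLambda_{ij}|\hat\bfX_{ij}|\geq 0$, and the crux is to show $g(\hat\bfX)>0$: if $g(\hat\bfX)=0$, then $\bfLambda_{ij}>0$ for $i\neq j$ forces $\hat\bfX_{ij}=0$ off the diagonal, so $\hat\bfX$ is diagonal with $\hat\bfX_{ii}\geq 0$; the remaining terms then give $\sum_i(\bfS_{ii}+\bfLambda_{ii})\hat\bfX_{ii}=0$, and since each coefficient $\bfS_{ii}+\bfLambda_{ii}$ is strictly positive, $\hat\bfX=0$, contradicting $\|\hat\bfX\|_F=1$. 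So eventually the bracket exceeds some $c>0$, whence $f(\bfX_n)\geq -p\log M_n + cM_n\to+\infty$, contradicting $f(\bfX_n)\leq\alpha$.

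\textbf{Summary of the obstacle.} The hard part is boundedness, and the trick is the radial/directional split: the linear factor $M_n$ dominates the logarithmic $p\log M_n$ provided the directional quantity $g(\hat\bfX)$ cannot vanish on the unit Frobenius sphere. Showing non-vanishing is exactly where both hypotheses $\Diag(\bfS)+\bfLambda>0$ (killing off-diagonal directions via $\bfLambda_{ij}>0$) and $\diag(\bfLambda)\geq 0$ together with $\bfS\succeq 0$ (killing diagonal directions via $\bfS_{ii}+\bfLambda_{ii}>0$) are used in tandem.
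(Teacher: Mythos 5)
Your proof is correct, and it takes a genuinely different route from the paper's. The paper proves (a) first, via Lagrangian duality: it introduces the dual problem \eqref{dual-prob}, certifies its feasibility through a minimax argument for $\max_{\bfU\in\cU}\lambda_{\min}(\bfS+\bfLambda\circ\bfU)$, shows the dual maximum is attained by a compactness argument, and recovers the unique primal solution as $\bfX^*=(\bfS+\bfLambda\circ\bfU^*)^{-1}$ from the subgradient optimality conditions; it then proves (b) by bounding the off-diagonal entries on $\cL$ --- a step that re-invokes (a) with the halved penalty $\bfLambda/2$ to ensure ${\underline f}^*$ is finite --- and afterwards bounding the diagonal entries using Hadamard's inequality $\det(\bfX)\le \bfX_{11}\cdots\bfX_{pp}$ together with the coercivity of $-\log x + cx$ for $c>0$. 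You instead prove (b) first and directly, by a normalization/recession argument in which the linear term $M_n\, g(\hat\bfX_n)$ dominates $-p\log M_n$ once the directional slope $g$ is bounded away from zero on the unit sphere, and you then obtain (a) for free from Weierstrass plus strict convexity; this dispenses with the dual construction entirely and avoids the mild bootstrap in the paper where part (b) leans on part (a). It is worth noting that your crucial positivity step --- $g(\hat\bfX)>0$ for unit-Frobenius-norm directions $\hat\bfX\succeq 0$ --- is essentially the same computation the paper carries out on the trace-one spectraplex $\Omega$ to establish \eqref{sx-ineq}: both use $\bfLambda_{ij}>0$ for $i\neq j$ to kill off-diagonal directions, and $\bfS_{ii}+\bfLambda_{ii}>0$ together with $\tr(\bfS\bfX)\ge 0$ to kill diagonal ones, so the hypotheses enter at exactly the same spot in both arguments, just in different roles (dual feasibility there, coercivity here). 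What your route buys is a shorter, more elementary and self-contained existence proof, which moreover yields compactness of $\cL$ for every real $\alpha$ in one stroke rather than in two stages; what the paper's route buys is the explicit dual problem and the closed-form primal--dual relation $\bfX^*=(\bfS+\bfLambda\circ\bfU^*)^{-1}$, whose subgradient bookkeeping is structurally informative and foreshadows the KKT analysis behind the screening rule in Theorem~\ref{thm:nesssuff}.
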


\begin{proof}
(a) Let $\cU = \{\bfU \in \cS^p: \bfU_{ij} \in [-1,1], \  \forall i, j\}$. Consider the problem
\beq \label{dual-prob}
 \max\limits_{\bfU \in \cU} \left\{\log\det(\mathbf{S}+\bfLambda \circ \bfU):
\mathbf{S}+\bfLambda \circ \bfU \succ 0\right\}.
\eeq

We first claim that the feasible region of problem \eqref{dual-prob} is  nonempty, or equivalently, there
exists $\bar\bfU \in \cU$ such that $\lambda_{\min}(\bfS+\bfLambda \circ\bar\bfU) >0$. Indeed, one
can observe that
\beqa
\max\limits_{\bfU \in \cU} \lambda_{\min}(\bfS+\bfLambda \circ\bfU) &=& \max\limits_{t, \bfU \in \cU}
\{t: \bfLambda \circ \bfU +\bfS -t \mathbf{I} \succeq 0 \},  \nn\\
&=& \min\limits_{\bfX \succeq 0} \max\limits_{t,\bfU\in \cU} \left\{t + \tr(\bfX(\bfLambda \circ \bfU +\bfS -t \mathbf{I})) \right\}, \nn \\
&=& \min\limits_{\bfX \succeq 0} \left\{\tr(\bfS\bfX)+\sum\limits_{ij}\bfLambda_{ij}|\bfX_{ij}|: \ \tr(\bfX)=1\right\}, \label{lambdamin}
\eeqa
where the second equality follows from the Lagrangian duality since its associated Slater condition is
satisfied. Let $\Omega: = \{\bfX \in \cS^p: \tr(\bfX)=1, \ \bfX \succeq 0\}$.  By the assumption
$\Diag(\bfS)+\bfLambda>0$, we see that $\bfLambda_{ij}  > 0$ for all $i\neq j$ and $\bfS_{ii}+\bfLambda_{ii}>0$ for every $i$.  Since $\Omega \subset \cS^p_+$, we have
$\tr(\bfS\bfX) \ge 0$ for all $\bfX \in \Omega$. If there exists some $k\neq l$ such that
$\bfX_{kl}>0$, then $\sum\limits_{i\neq j} \bfLambda_{ij}|\bfX_{ij}| >0$ and hence,
\beq \label{sx-ineq}
\tr(\bfS\bfX)+\sum_{ij}\bfLambda_{ij}  |\bfX_{ij}| ~>~ 0, ~ \forall \bfX \in \Omega.
\eeq
 Otherwise, one has $\bfX_{ij}=0$ for all $i\neq j$, which, together with the facts that $\bfS_{ii}+\bfLambda_{ii}>0$ for all $i$  and $\tr(\bfX)=1$, implies that for all $\bfX \in \Omega$,
\[
\tr(\bfS\bfX) +\sum\limits_{ij}\bfLambda_{ij}|\bfX_{ij}| ~=~ \sum\limits_{i} (\bfS_{ii}+\bfLambda_{ii})\bfX_{ii} ~\ge~ \tr(\bfX) \min\limits_i (\bfS_{ii}+\bfLambda_{ii}) ~>~ 0.
\]
Hence,  \eqref{sx-ineq} again holds. Combining \eqref{lambdamin} with \eqref{sx-ineq}, one can see that $\max\limits_{\bfU \in \cU} \lambda_{\min}(\bfS+\bfLambda \circ \bfU)~>~ 0$. Therefore, problem \eqref{dual-prob} has at least a feasible solution.

We next show that problem \eqref{dual-prob} has an optimal solution.
Let $\bar\bfU$ be a feasible point of  \eqref{dual-prob}, and
\[
\bar\Omega := \{\bfU \in \cU:
\log\det(\mathbf{S}+\bfLambda \circ \bfU) ~\ge~ \log\det(\mathbf{S}+\bfLambda \circ \bar \bfU), \ \mathbf{S}+\bfLambda \circ \bfU \succ 0\}.
\]
One can observe that $\{\mathbf{S}+\bfLambda \circ \bfU: \bfU \in \cU\}$ is compact. Using this fact, it is not hard to see that $\log\det(\mathbf{S}+\bfLambda \circ \bfU) \to -\infty$ as $\bfU\in\cU$ and $\lambda_{\min}(\mathbf{S}+\bfLambda \circ \bfU) \downarrow 0$. Thus there exists some $\delta>0$ such that
\[
\bar\Omega \subseteq \{\bfU \in \cU: \mathbf{S}+\bfLambda \circ \bfU \succeq \delta I\},
\]
which implies that
\[
\bar\Omega = \{\bfU \in \cU:
\log\det(\mathbf{S}+\bfLambda \circ \bfU) ~\ge~ \log\det(\mathbf{S}+\bfLambda \circ \bar \bfU), \ \mathbf{S}+\bfLambda \circ \bfU \succeq \delta I\}.
\]
Hence, $\bar\Omega$ is a compact set. In addition, one can observe that problem~\eqref{dual-prob}
is equivalent to
\[
\max\limits_{\bfU \in \bar\Omega} \log\det(\mathbf{S}+\bfLambda \circ \bfU).
\]
The latter problem clearly has an optimal solution and so is problem~\eqref{dual-prob}.

Finally we show that $\bfX^*=(\mathbf{S}+\bfLambda \circ \bfU^*)^{-1}$ is the unique optimal
solution of \eqref{logdet}, where $\bfU^*$ is an optimal solution of \eqref{dual-prob}.  Since
$\mathbf{S}+\bfLambda \circ \bfU^* \succ 0$, we have $\bfX^* \succ 0$. By the definitions of $\cU$ and $\bfX^*$, and the first-order optimality conditions of~\eqref{dual-prob} at
$\bfU^*$, one can have
\[
\bfU^*_{ij} = \left\{\ba{ll}
1 & \ \mbox{if} \ \bfX^*_{ij} > 0; \\
\beta \in [-1,1] & \ \mbox{if} \ \bfX^*_{ij} = 0; \\
-1 & \ \mbox{otherwise}.
\ea\right.
\]
It follows that $\bfLambda \circ \bfU^* \in \partial  (\sum_{ij}\bfLambda_{ij}  |\bfX_{ij}|)$ at $\bfX=\bfX^*$, where $\partial(\cdot)$ stands for the subdifferential of the associated convex
function.   For convenience, let $f(\bfX)$ denote the objective function of \eqref{logdet}.  Then we have
\[
 -(\bfX^*)^{-1} + \mathbf{S} +\bfLambda \circ \bfU^*  \in \partial f(\bfX^*),
\]
which, together with $\bfX^*=(\mathbf{S}+\bfLambda \circ \bfU^*)^{-1}$, implies that
$0\in \partial f(\bfX^*)$. Hence,
$\bfX^*$ is an optimal solution of \eqref{logdet} and moreover it is unique due to the strict convexity of $-\log\det(\cdot)$.

(b) By statement (a), problem~\eqref{logdet} has a finite optimal value $f^*$. Hence, the above sub-level set $\cL$ is nonempty.  We can observe that for any $\bfX \in \cL$,
\beqa
\frac12 \sum_{ij}\bfLambda_{ij}  |\bfX_{ij}| &=& \ f(\bfX) -  [\underbrace{-\log\det(\mathbf{X}) + \tr(\mathbf{S}\mathbf{X}) + \frac12\sum_{ij}\bfLambda_{ij}  |\bfX_{ij}|}_{\underline f(\bfX)}],  \nn \\
& \le & \alpha - {\underline f}^* , \label{bdd-Xij}
\eeqa
where ${\underline f}^* := \inf\{\underline f(\bfX): \bfX \succ 0\} $. By the assumption $\Diag(\bfS)+\bfLambda >0$, one has $\Diag(\bfS)+\bfLambda/2 >0$. This together with
statement (a) yields ${\underline f}^* \in \Re$.  Notice that $\bfLambda_{ij}>0$ for all $i \neq j$.  This relation and \eqref{bdd-Xij} imply that $\bfX_{ij}$ is bounded for all $\bfX \in \cL$ and $i \neq j$.  In addition, it is well-known that $\det(\bfX) \le \bfX_{11} \bfX_{22} \cdots \bfX_{pp}$ for all $\bfX \succeq 0$. Using this relation, the definition of $f(\cdot)$, and the boundedness of $\bfX_{ij}$ for all  $\bfX \in \cL$ and $i \neq j$, we have that for every $\bfX \in \cL$,
\beqa
\sum_{i} -\log(\bfX_{ii}) + (\bfS_{ii}  + \Lambda_{ii}) \bfX_{ii}  & \le & f(\bfX) - \sum_{i\neq j} (\bfS_{ij} \bfX_{ij} + \Lambda_{ij} |\bfX_{ij}|), \nn \\
& \le & \alpha - \sum_{i\neq j} (\bfS_{ij} \bfX_{ij} + \Lambda_{ij} |\bfX_{ij}|) \ \le \ \delta \label{bdd-Xii}
\eeqa
for some $\delta >0$. In addition, notice from  the assumption that $\bfS_{ii}  + \Lambda_{ii} >0$ for all $i$, and hence
\[
-\log(\bfX_{ii}) + (\bfS_{ii}  + \Lambda_{ii}) \bfX_{ii}  \ \ge \ 1+\min\limits_k \log(\bfS_{kk}  + \Lambda_{kk})  \ =: \ \sigma
\]
for all $i$.  This relation together with \eqref{bdd-Xii}  implies that for every $\bfX \in \cL$ and all $i$,
\[
-\log(\bfX_{ii}) + (\bfS_{ii}  + \Lambda_{ii}) \bfX_{ii} \ \le \ \delta - (p-1) \sigma,
\]
and hence $\bfX_{ii}$ is bounded for all $i$ and $\bfX \in \cL$. We thus conclude that $\cL$ is bounded. In view of this result and the definition of $f$, it is not hard to see that there exists some $\nu >0$ such that $\lambda_{\min}(\bfX) \ge \nu$ for all $\bfX\in \cL$. Hence, one has
\[
\cL =\{\bfX \succeq \nu I:  f(\bfX) \le \alpha\}.
\]
 By the continuity of $f$ on $\{\bfX: \bfX \succeq \nu I\}$,  it follows that $\cL$ is closed. Hence,
$\cL$ is compact.
\end{proof}

\gap

We are now ready to prove Theorem \ref{thm:coord_desc_thm}.

\begin{proof}
Since $\lambda_1 >0$ and $\diag(\mathbf{S}^{(k)}) >0, k=1,\dots, K$, it follows from Lemma~\ref{single-graph} that there exists some $\delta$ such that for each $k=1,\dots,K$,
\[
- \log \det({\mathbf{\Theta}^{(k)}})+\tr(\mathbf{S}^{(k)}{\mathbf{\Theta}^{(k)}}) + \lambda_1 \sum_{i\neq j}|{\mathbf \Theta}_{ij}^{(k)}| ~\ge~ \delta, \quad \forall   {\mathbf{\Theta}^{(k)}} \succ 0.
\]
For convenience, let $h(\mathbf{\Theta})$ denote the objective function of \eqref{eq:fusedloglikelihood} and $\bar{\mathbf{\Theta}}=(\bar{\mathbf{\Theta}}^{(1)},\dots,\bar{\mathbf{\Theta}}^{(K)})$ an arbitrary
feasible point of \eqref{eq:fusedloglikelihood}. Let
\[
\ba{lcl}
\Omega &=& \left\{\mathbf{\Theta}=(\mathbf{\Theta}^{(1)},\dots,\mathbf{\Theta}^{(K)}): h(\mathbf{\Theta}) \le h(\bar{\mathbf{\Theta}}), \ \mathbf{\Theta}^{(k)} \succ 0, k=1,\dots, K\right\}, \\
\Omega_k &=& \left\{\mathbf{\Theta}^{(k)} \succ 0: - \log \det({\mathbf{\Theta}^{(k)}})+\tr(\mathbf{S}^{(k)}{\mathbf{\Theta}^{(k)}}) + \lambda_1 \sum_{i\neq j}|{\mathbf \Theta}_{ij}^{(k)}| \ \le \ \bar\delta\right\}
\ea
\]
for $k=1,\dots, K$, where $\bar\delta=h(\bar{\mathbf{\Theta}})-(K-1)\delta$. Then it is not hard to observe that $\Omega \subseteq \bar \Omega := \Omega_1 \times \cdots \times \Omega_K$. Moreover, problem \eqref{eq:fusedloglikelihood} is equivalent to
\beq \label{equiv-prob}
\min_{\mathbf{\Theta} \in \bar\Omega} h(\mathbf{\Theta}).
\eeq
In view  of Lemma \ref{single-graph}, we know that  $\Omega_k$ is compact for all $k$, which implies that $\bar\Omega$ is also compact. Notice that $h$ is continuous and strictly convex on $\bar\Omega$. Hence, problem \eqref{equiv-prob} has a unique optimal solution and so is problem \eqref{eq:fusedloglikelihood}.
\end{proof}


\section{The screening rule for fused multiple graphical lasso}\label{sec:screening}
Due to the presence of the log determinant, it is challenging to solve the formulations involving the penalized log-likelihood efficiently. The existing methods for single graphical lasso are not scalable to the problems with a large amount of features because of the high computational complexity. Recent studies have shown that the graphical model may contain many connected components, which are disjoint with each other, due to the sparsity of the graphical model, i.e., the corresponding precision matrix has a block diagonal structure (subject to some rearrangement of features). To reduce the computational complexity, it is advantageous to first identify the block structure and then compute the diagonal blocks of the precision matrix instead of the whole matrix.
Danaher et al.~\cite{Danaher2011joint} developed a similar necessary and sufficient condition for fused graphical lasso with two graphs, thus the block structure can be identified. However, it remains a challenge to derive the necessary and sufficient condition for the solution of fused multiple graphical lasso to be block diagonal for $K>2$ graphs.

In this section, we first present a theorem demonstrating that FMGL can be decomposable once its solution has a block diagonal structure. Then we derive a necessary and sufficient condition for the solution of FMGL to be block diagonal for arbitrary number of graphs.

Let $C_1,\dots,C_L$ be a partition of the $p$ features into $L$ non-overlapping sets, with $C_l \cap C_{l'} = \emptyset,~\forall l\neq l'$ and $\bigcup_{l=1}^L C_l=\{1,\dots,p\}$. We say that the solution $\mathbf{\widehat \Theta}$ of FMGL \eqref{eq:fusedloglikelihood} is block diagonal with $L$ known blocks consisting of {features in the sets} $C_l,~l=1,\dots,L$ if there exists a permutation matrix $\mathbf{U} \in \Re^{p \times p} $ such that each estimation precision matrix takes the form of
\beq \label{block-soln}
\widehat{\bf \Theta}^{(k)} ={\bf U}\left( \begin{array}{ccc}\widehat{\bf \Theta}_1^{(k)} &  &\\& \ddots & \\ &  &\widehat{\bf \Theta}_L^{(k)}\end{array}\right){\bf U}^T, \ k = 1,\dots,K.
\eeq
For simplicity of presentation, we assume throughout this paper that $\bf U= I$.

The following decomposition result for problem \eqref{eq:fusedloglikelihood} is straightforward. Its proof is thus omitted.

\begin{theorem}\label{thm:multiBlock}
Suppose that the solution $\mathbf{\widehat \Theta}$ of FMGL \eqref{eq:fusedloglikelihood} is block diagonal with $L$ known $C_l,~l=1,\dots,L$, i.e., each estimated precision matrix has the form~\eqref{block-soln} with $\bf U=I$.
Let ${\mathbf{\widehat \Theta}_l}=(\mathbf{\widehat \Theta}_l^{(1)},\dots,\mathbf{\widehat \Theta}_l^{(K)})$ for $l=1,\dots,L$. Then there holds:
\beq \label{block-FMGL}
\quad\quad {\mathbf{\widehat \Theta}_l} = \arg \min_{\mathbf{\mathbf {\Theta}}_l\succ 0} \sum_{k=1}^K \left(- \log \det({\mathbf{\Theta}_l^{(k)}})+\tr(\mathbf{S}_l^{(k)}{\mathbf{\Theta}_l^{(k)}})\right) + P({\mathbf{\Theta}}_l),~l=1,\dots,L,
\eeq
where ${\mathbf{\Theta}_l^{(k)}}$ and
$\mathbf{S}_l^{(k)}$ are the $|C_l|\times|C_l|$ symmetric submatrices of ${\mathbf{\Theta}^{(k)}}$ and $\mathbf{S}^{(k)}$ corresponding to the $l$-th diagonal block, respectively, for $k=1,\dots,K$, and  $\mathbf{\Theta}_l = (\mathbf{\Theta}_l^{(1)},\dots,\mathbf{\Theta}_l^{(K)})$ for
$l=1,\dots,L$.
\end{theorem}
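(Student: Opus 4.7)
The plan is to observe that the FMGL objective restricted to matrices with the prescribed block‑diagonal pattern decouples into a sum of $L$ independent objectives, one per block, and each of these independent objectives is exactly the FMGL problem~\eqref{block-FMGL} on the block. Since we are told the global minimizer $\widehat{\mathbf{\Theta}}$ already has this block‑diagonal pattern, the block‑wise minimizers of the decoupled problems must coincide with its blocks.

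Concretely, I would first rewrite the three pieces of the objective under the assumption that every $\mathbf{\Theta}^{(k)}$ is block diagonal with blocks $\mathbf{\Theta}_l^{(k)}$, $l=1,\dots,L$. Block‑diagonality gives $\log\det(\mathbf{\Theta}^{(k)}) = \sum_{l=1}^L \log\det(\mathbf{\Theta}_l^{(k)})$, and for the trace term all off‑block entries of $\mathbf{\Theta}^{(k)}$ vanish, so $\tr(\mathbf{S}^{(k)}\mathbf{\Theta}^{(k)}) = \sum_{l=1}^L \tr(\mathbf{S}_l^{(k)}\mathbf{\Theta}_l^{(k)})$. The penalty $P(\mathbf{\Theta})$ only involves off‑diagonal entries $\mathbf{\Theta}_{ij}^{(k)}$ (and their differences across $k$), and those with $i,j$ in different blocks are zero by assumption; thus $P(\mathbf{\Theta}) = \sum_{l=1}^L P(\mathbf{\Theta}_l)$. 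Putting these three identities together, the FMGL objective $h(\mathbf{\Theta})$ restricted to the block‑diagonal subspace equals $\sum_{l=1}^L h_l(\mathbf{\Theta}_l)$, where $h_l$ is precisely the objective appearing on the right‑hand side of~\eqref{block-FMGL}.

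Next I would invoke Theorem~\ref{thm:coord_desc_thm}: each $h_l$ has a unique minimizer over $\mathbf{\Theta}_l \succ 0$, call it $\mathbf{\Theta}_l^\star$. Assembling these into a block‑diagonal tuple $\mathbf{\Theta}^\star$ gives a feasible point of~\eqref{eq:fusedloglikelihood} with $h(\mathbf{\Theta}^\star) = \sum_l h_l(\mathbf{\Theta}_l^\star) \le \sum_l h_l(\widehat{\mathbf{\Theta}}_l) = h(\widehat{\mathbf{\Theta}})$. Because $\widehat{\mathbf{\Theta}}$ is the unique global minimizer of~\eqref{eq:fusedloglikelihood} (again by Theorem~\ref{thm:coord_desc_thm}), we must have $\mathbf{\Theta}^\star = \widehat{\mathbf{\Theta}}$, i.e., $\widehat{\mathbf{\Theta}}_l = \mathbf{\Theta}_l^\star$ for each $l$, which is \eqref{block-FMGL}.

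There is no real obstacle here; the only points that require any care are verifying that the cross‑block trace contributions vanish (which uses block‑diagonality of $\mathbf{\Theta}^{(k)}$, not of $\mathbf{S}^{(k)}$) and that the fused penalty, being a sum over individual index pairs $(i,j)$ with $i\neq j$, also decouples by block. That is presumably why the authors labeled the result straightforward and omitted the proof.
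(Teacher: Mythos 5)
Your proof is correct and is exactly the straightforward argument the paper had in mind when it declared the result obvious and omitted the proof: the objective separates as $h(\mathbf{\Theta})=\sum_{l=1}^L h_l(\mathbf{\Theta}_l)$ on the block-diagonal subspace (log-det, trace, and fused penalty all decouple, the trace needing only block-diagonality of $\mathbf{\Theta}^{(k)}$), and Theorem~\ref{thm:coord_desc_thm} — which applies to each block problem since $\diag(\mathbf{S}_l^{(k)})>0$ is inherited by principal submatrices — gives the uniqueness that forces the blocks of $\widehat{\mathbf{\Theta}}$ to coincide with the per-block minimizers. No gaps; your two points of care (cross-block trace terms and penalty decoupling) are precisely the right ones to check.
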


\gap

The above theorem demonstrates that if a large-scale FMGL problem has a block diagonal solution,
it can then be decomposed into a group of smaller sized FMGL problems. The computational cost for the latter problems can be much cheaper. Now one natural question is how to efficiently identify the block diagonal structure of the FMGL solution before solving the problem. We address this question in the remaining part of this section.

The following theorem provides a necessary and sufficient condition for the solution of FMGL to be
block diagonal with $L$ blocks $C_l, l=1,\dots, L$, which is a key for developing efficient
decomposition scheme for solving FMGL. Since its proof requires some substantial
development of other technical results, we shall postpone the proof until the end of this section.

\begin{theorem}\label{thm:nesssuff}
The FMGL \eqref{eq:fusedloglikelihood} has a block diagonal solution $\mathbf{\widehat \Theta}^{(k)},k=1,\dots,K$ with $L$ known blocks $C_l, l=1,\dots,L$ if and only if $\mathbf{S}^{(k)}, k=1,\dots,K$ satisfy the following inequalities:
\begin{equation}\label{eq:nessuarysuff}
\left\{
\ba{l}
 |\sum^t_{k=1} \bfS_{ij}^{(k)}| \le t \lambda_1 + \lambda_2, \\ [4pt]
 |\sum^{t-1}_{k=0} \bfS_{ij}^{(r+k)}| \le t \lambda_1 + 2\lambda_2, \ 2 \le r \le K-t, \\ [4pt]
 |\sum^{t}_{k=1} \bfS_{ij}^{(K-t+k)}| \le t \lambda_1 + \lambda_2, \\ [4pt]
 |\sum^K_{k=1} \bfS_{ij}^{(k)}| \le K \lambda_1
\ea\right.
\end{equation}
for $t=1,\dots,K-1$, $i\in C_l, j \in C_{l'}, l\neq l'$.
\end{theorem}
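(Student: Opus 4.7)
The plan is to apply the KKT optimality conditions for \eqref{eq:fusedloglikelihood}, which are necessary and sufficient since the objective is convex and the unique minimizer exists by Theorem \ref{thm:coord_desc_thm}. Writing $\nabla\log\det(\mathbf{\Theta}^{(k)}) = (\mathbf{\Theta}^{(k)})^{-1}$ and taking the subdifferential of $P(\mathbf{\Theta})$ entrywise, optimality at $\widehat{\mathbf{\Theta}}$ reads
\[
-\bigl(\widehat{\mathbf{\Theta}}^{(k)}\bigr)^{-1}_{ij}+\mathbf{S}^{(k)}_{ij}+\lambda_1\alpha^{(k)}_{ij}+\lambda_2\bigl(\beta^{(k)}_{ij}-\beta^{(k-1)}_{ij}\bigr)=0
\]
for every $i\neq j$ and $k=1,\dots,K$, where $\alpha^{(k)}_{ij}\in\partial|\widehat{\mathbf{\Theta}}^{(k)}_{ij}|\subseteq[-1,1]$, $\beta^{(k)}_{ij}\in\partial|\widehat{\mathbf{\Theta}}^{(k)}_{ij}-\widehat{\mathbf{\Theta}}^{(k+1)}_{ij}|\subseteq[-1,1]$ for $k=1,\dots,K-1$, with boundary convention $\beta^{(0)}_{ij}=\beta^{(K)}_{ij}=0$.

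For necessity, assume $\widehat{\mathbf{\Theta}}^{(k)}$ is block diagonal with blocks $C_1,\dots,C_L$. Then for $i\in C_l$, $j\in C_{l'}$, $l\neq l'$, both $\widehat{\mathbf{\Theta}}^{(k)}_{ij}$ and $(\widehat{\mathbf{\Theta}}^{(k)})^{-1}_{ij}$ vanish, so the KKT relation collapses to
\[
\mathbf{S}^{(k)}_{ij}+\lambda_1\alpha^{(k)}_{ij}+\lambda_2\bigl(\beta^{(k)}_{ij}-\beta^{(k-1)}_{ij}\bigr)=0.
\]
Summing this equality over $k=r,r+1,\dots,r+t-1$ yields
\[
\sum_{k=0}^{t-1}\mathbf{S}^{(r+k)}_{ij}=-\lambda_1\sum_{k=0}^{t-1}\alpha^{(r+k)}_{ij}-\lambda_2\bigl(\beta^{(r+t-1)}_{ij}-\beta^{(r-1)}_{ij}\bigr),
\]
and taking absolute values together with $|\alpha|,|\beta|\le1$ and the boundary vanishing $\beta^{(0)}=\beta^{(K)}=0$ immediately produces the four groups of inequalities in \eqref{eq:nessuarysuff}, according as neither, one, or both endpoints of the interval $[r,r+t-1]$ coincide with $1$ or $K$.

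For sufficiency, I would construct a candidate solution $\widetilde{\mathbf{\Theta}}$ by solving the reduced FMGL problems \eqref{block-FMGL} on each block $C_l$ independently, and padding with zeros off the block diagonal. This candidate automatically satisfies the KKT relations at on-block entries (these are precisely the KKT conditions of the reduced problems). It remains to exhibit, for every off-block pair $(i,j)$, subgradients $\alpha^{(1)},\dots,\alpha^{(K)}\in[-1,1]$ and $\beta^{(1)},\dots,\beta^{(K-1)}\in[-1,1]$ with $\beta^{(0)}=\beta^{(K)}=0$ satisfying the linear system $\lambda_1\alpha^{(k)}+\lambda_2(\beta^{(k)}-\beta^{(k-1)})=-\mathbf{S}^{(k)}_{ij}$ for $k=1,\dots,K$. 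This is a one-dimensional LP feasibility problem whose data are the entries $\mathbf{S}^{(k)}_{ij}$.

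The main obstacle is the last step: showing that the inequalities \eqref{eq:nessuarysuff} are not merely necessary but also sufficient for the feasibility of this LP. My plan is to invoke LP duality (Farkas' lemma), observing that the constraint matrix has an interval/network structure so that the extreme dual rays are supported on contiguous index sets $\{r,r+1,\dots,r+t-1\}$, with the boundary cases ($r=1$, $r+t-1=K$, or both) producing exactly the four forms listed in \eqref{eq:nessuarysuff}. Equivalently, one may proceed constructively by eliminating $\alpha^{(k)}=-\bigl(\mathbf{S}^{(k)}_{ij}+\lambda_2(\beta^{(k)}-\beta^{(k-1)})\bigr)/\lambda_1$, reducing the problem to finding $\beta^{(1)},\dots,\beta^{(K-1)}\in[-1,1]$ (with $\beta^{(0)}=\beta^{(K)}=0$) such that $\lambda_2(\beta^{(k)}-\beta^{(k-1)})+\mathbf{S}^{(k)}_{ij}\in[-\lambda_1,\lambda_1]$, and then tracking the reachable set for $\beta^{(k)}$ via a forward recursion; the inequalities in \eqref{eq:nessuarysuff} are precisely the conditions guaranteeing that the reachable set at step $K$ contains $0$. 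Either route completes sufficiency and hence the theorem.
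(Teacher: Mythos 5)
Your architecture coincides with the paper's: both proofs write the first-order optimality conditions with subgradients $\gamma^{(k)}_{ij}$ (your $\alpha$) and $\upsilon^{(k,k+1)}_{ij}$ (your $\beta$), observe that block diagonality of $\widehat{\mathbf{\Theta}}^{(k)}$ forces $(\widehat{\mathbf{\Theta}}^{(k)})^{-1}_{ij}=0$ off the blocks so that the KKT conditions collapse to the linear system \eqref{eq:KKTforfusedwithoutW}, and prove sufficiency by solving the reduced problems \eqref{block-FMGL} blockwise, padding with zeros, and verifying the assembled KKT system — exactly the paper's construction. Your necessity half is complete and in fact more direct than the paper's: telescoping the fused subgradients over a window $k=r,\dots,r+t-1$ and using $|\alpha|,|\beta|\le 1$ with the boundary convention $\beta^{(0)}=\beta^{(K)}=0$ yields the four families of \eqref{eq:nessuarysuff} in one step, whereas the paper routes both directions through the feasibility characterization of Lemma~\ref{thm:linearsy}.

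The genuine gap is in sufficiency, and it sits precisely where the paper invests its technical effort (Lemmas~\ref{ext-ray}--\ref{thm:linearsy}). You correctly isolate the crux — that \eqref{eq:nessuarysuff} implies feasibility of the one-dimensional subgradient system — but neither proposed route is executed. Your first route rests on the unproven assertion that ``the extreme dual rays are supported on contiguous index sets.'' After dualizing, the condition to verify is that $f(y)=x^Ty-\lambda_1\sum_{i}|y_i|-\lambda_2\sum_{i}|y_i-y_{i+1}|\le 0$ for all $y\in\Re^n$, and reducing this to finitely many test vectors requires showing that the extreme rays of each sign cone $P_{I,J}$ are single constant plateaus $\pm(0,\dots,0,1,\dots,1,0,\dots,0)$; one must rule out rays with several separated plateaus or plateaus of unequal heights, and this does not follow from the ``interval structure'' of the constraint matrix by inspection. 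The paper proves it in Lemma~\ref{ext-ray} by counting the $n-1$ independent active inequalities at an extreme ray across four boundary cases (the case analysis over $m_1$ and $n_k$), which is the real content of the theorem. Your second route — the forward recursion tracking the reachable interval for $\beta^{(k)}$ — is a viable and genuinely more elementary alternative, since each reachable set is an interval whose endpoints are extrema of shifted partial sums of the $\mathbf{S}^{(k)}_{ij}$; but you assert, rather than verify, that nonemptiness at every step plus reachability of $0$ at step $K$ unwinds into exactly \eqref{eq:nessuarysuff}, and the clipping at $\pm 1$ makes that induction the nontrivial bookkeeping step (the endpoints become max/min over all windows ending at $k$). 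As written the proof is therefore incomplete: you must either carry out that recursion or supply an extreme-ray lemma equivalent to the paper's Lemma~\ref{ext-ray}.
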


\gap

One immediate consequence of Theorem~\ref{thm:nesssuff} is that  the
conditions~\eqref{eq:nessuarysuff} can be used as a screening rule to identify the block
diagonal structure of the FMGL solution. The steps about this rule are described as follows.

\begin{enumerate}
\item Construct an adjacency matrix $\mathbf{E}=\mathbf{I}_{p\times p}$. Set ${\bf E}_{ij}={\bf E}_{ji}=0$ if $\bfS_{ij}^{(k)}, k=1,\dots,K$ satisfy the conditions~\eqref{eq:nessuarysuff}. Otherwise, set ${\bf E}_{ij}={\bf E}_{ji}=1$.
\item Identify the connected components of the adjacency matrix $\mathbf{E}$ (for example, it can be done by calling the Matlab function ``graphconncomp'').
\end{enumerate}

In view of Theorem~\ref{thm:nesssuff}, it is not hard to observe that the resulting connected
components are the partition of the $p$ features into nonoverlapping sets.
It then follows from Theorem~\ref{thm:multiBlock} that a large-scale FMGL problem can be decomposed into a group of smaller sized FMGL problems restricted to the features in each connected component. The computational cost for the
latter problems can be much cheaper. Therefore, this approach may enable us to solve large-scale
FMGL problems very efficiently.


\gap

In the remainder of this section we provide a proof for Theorem~\ref{thm:nesssuff}. Before proceeding,  we establish several technical lemmas as follows.

\gap

\begin{lemma} \label{ext-ray}
Given any two arbitrary index sets $I \subseteq \{1,\cdots,n\}$ and $J \subseteq \{1,\cdots,n-1\}$,
let $\bar I$ and $\bar J$ be the complement of $I$ and $J$ with respect to $\{1,\cdots,n\}$ and $\{1,\cdots,n-1\}$,
respectively. Define
\beq \label{PIJ}
P_{I,J} = \left\{y\in \Re^n: y_I \ge 0, \ y_\bI \le 0, \ y_J - y_{J+1} \ge 0, \ y_\bJ - y_{\bJ + 1} \le 0\right\},
\eeq
where $J+1 = \{j+1: j \in J\}$ and $\bJ + 1 = \{j+1: j \in \bJ\}$. Then, the following statements hold:
\bi
\item[(i)]
Either $P_{I,J}=\{0\}$ or $P_{I,J}$ is unbounded;
\item[(ii)] $0$ is the unique extreme point of $P_{I,J}$;
\item[(iii)]
Suppose that $P_{I,J}$ is unbounded. Then, $\emptyset \neq \ext(P_{I,J}) \subseteq Q$, where $\ext(P_{I,J})$
denotes the set of all extreme rays of $P_{I,J}$, and
\beq \label{d-form}
\ \ \ \ \ \ Q := \{\alpha(\underbrace{0,\cdots,0}_m,\underbrace{1,\cdots,1}_l,0,\cdots,0)^T \in \Re^n: \alpha \neq 0, m \ge 0, 1 \le l \le n\}.
\eeq
\ei
\end{lemma}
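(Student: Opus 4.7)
The plan is to treat $P_{I,J}$ as a polyhedral cone and exploit its conic/pointedness structure for (i) and (ii), then characterize its extreme rays via active constraints for (iii).

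For (i), observe that $P_{I,J}$ is defined by homogeneous linear inequalities, so it is a convex cone: $0 \in P_{I,J}$ and $\alpha y \in P_{I,J}$ for every $y \in P_{I,J}$ and $\alpha \ge 0$. Therefore if $P_{I,J}$ contains any nonzero $y$, the ray $\{\alpha y: \alpha \ge 0\}$ is unbounded and lies in $P_{I,J}$; otherwise $P_{I,J}=\{0\}$.

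For (ii), I would show $P_{I,J}$ is pointed, i.e.\ $P_{I,J}\cap(-P_{I,J})=\{0\}$. If $y$ and $-y$ both lie in $P_{I,J}$, then the sign constraints $y_I\ge 0$ and $-y_I\ge 0$ force $y_I=0$, and similarly $y_{\bar I}=0$, so $y=0$. Pointedness plus the fact that $0\in P_{I,J}$ gives that $0$ is an extreme point. Uniqueness is immediate: any nonzero $y\in P_{I,J}$ satisfies $y=\tfrac12(\tfrac12 y)+\tfrac12(\tfrac32 y)$ with both $\tfrac12 y,\tfrac32 y\in P_{I,J}$ and distinct from $y$, so $y$ is not extreme.

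For (iii), which is the main work, I would use the standard characterization that a nonzero $y\in P_{I,J}$ generates an extreme ray iff the matrix of inequality constraints active at $y$ has rank $n-1$. The active constraints at $y$ fall in two families: (a) sign constraints $y_i=0$ for $i$ in $Z:=\{i:y_i=0\}$, and (b) difference constraints $y_j-y_{j+1}=0$ for $j$ in $E:=\{j:y_j=y_{j+1}\}$. The set $E$ determines a partition of $\{1,\dots,n\}$ into maximal consecutive blocks $I_1,\dots,I_k$ on which $y$ is constant, with values $v_1,\dots,v_k$; the sign constraints are active at $i\in I_s$ iff $v_s=0$. A careful rank count then gives
\[
\mathrm{rank}(\text{active constraints})\;=\;\sum_{s:\,v_s\neq 0}(|I_s|-1)\;+\;\sum_{s:\,v_s=0}|I_s|\;=\;n\;-\;\#\{s:v_s\neq 0\},
\]
since within a nonzero block the $|I_s|-1$ equalities fully determine the block up to a scalar, while within a zero block the $|I_s|-1$ equalities together with any one $y_i=0$ contribute $|I_s|$ to the rank. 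Setting this rank equal to $n-1$ forces exactly one block to carry a nonzero constant value, which is precisely the form in~\eqref{d-form}. Finally, when $P_{I,J}$ is unbounded it is a nontrivial pointed polyhedral cone, so its finitely many extreme rays generate it; in particular $\mathrm{ext}(P_{I,J})\neq\emptyset$.

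The main obstacle is the rank computation in (iii): I need to argue carefully that adding a single $y_i=0$ constraint to the chain of equalities within a zero block contributes exactly one independent row, while any further $y_i=0$ on the same block is redundant. Once this bookkeeping is in place, the implication ``extreme ray $\Rightarrow$ one nonzero block'' is clean.
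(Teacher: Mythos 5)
Your proof is correct, and while it rests on the same two pillars as the paper's argument --- the characterization of extreme rays of a polyhedral cone by $n-1$ linearly independent active constraints, and Minkowski's resolution theorem to get $\ext(P_{I,J})\neq\emptyset$ --- the bookkeeping in part (iii) is organized genuinely differently, and more cleanly. The paper partitions only the \emph{zero} entries of $d$ into maximal runs $\{m_i,\dots,n_i\}$ and then splits into four cases according to whether these runs touch the endpoints ($m_1=1$ or not, $n_k=n$ or not), in each case deriving an upper bound ($n-k+1$, $n-k$, or $n-k-1$) on the number of independent active inequalities and deducing $k\le 2$ or $k=1$. Your decomposition into \emph{all} maximal constant blocks $I_1,\dots,I_k$ (zero and nonzero alike) yields an exact rank identity, $\mathrm{rank} = n - \#\{s: v_s\neq 0\}$, rather than case-dependent upper bounds: the active rows are supported on single blocks (boundary difference constraints are inactive by maximality of the blocks), so ranks add across disjoint supports, and setting the total to $n-1$ forces exactly one nonzero block with no case analysis at all. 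The step you flagged as the main obstacle is indeed unproblematic: within a zero block of size $q$, the $q-1$ chain equalities together with any single $y_i=0$ already have rank $q$ on that block's coordinates, the maximum possible, so further sign rows are automatically redundant. A side benefit of your exact formula is that it is an equivalence, so it also yields the reverse implication (any $y\in P_{I,J}$ with a single nonzero constant block spans an extreme ray), which the paper must supply separately via the explicit choice of $I$ and $J$ in Lemma~\ref{extray-repr}; note you would still need that choice of $I,J$ to recover the full statement of that lemma. Your part (ii) also takes a different route: the paper counts $n$ independent active constraints at $0$ and argues any extreme point admits $n$ such constraints, while you prove pointedness ($P_{I,J}\cap(-P_{I,J})=\{0\}$, using that every coordinate is sign-constrained since $I\cup\bar I=\{1,\dots,n\}$) and dispose of uniqueness by the scaling trick $y=\tfrac12\bigl(\tfrac12 y\bigr)+\tfrac12\bigl(\tfrac32 y\bigr)$; both are valid, and yours is the more conceptual of the two.
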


\begin{proof}
(i) We observe that $0\in P_{I,J}$. If $P_{I,J} \neq \{0\}$, then there exists $0 \neq y \in P_{I,J}$. Hence,
$\{\alpha y: \alpha \ge 0\} \subseteq P_{I,J}$, which implies that $P_{I,J}$ is unbounded.

(ii) It is easy to see that $0 \in P_{I,J}$ and moreover   there exist $n$ linearly independent
active inequalities at $0$. Hence, $0$ is an extreme point of $P_{I,J}$. On the other hand, suppose
$y$ is an arbitrary extreme point of $P_{I,J}$. Then there exist $n$ linearly independent
active inequalities at $y$, which together with the definition of $P_{I,J}$ immediately implies $y=0$.
Therefore, $0$ is the unique extreme point of $P_{I,J}$.

(iii) Suppose that $P_{I,J}$ is unbounded. By statement (ii), we know that $P_{I,J}$ has a unique extreme point.
Using Minkowski's resolution theorem (e.g., see \cite{bertsekas1997introduction}), we conclude that $\ext(P_{I,J}) \neq
\emptyset$. Let $d \in \ext(P_{I,J})$ be arbitrarily chosen. Then $d \neq 0$. It follows from
\eqref{PIJ} that $d$ satisfies the inequalities
\beq \label{d-ineqs}
d_I \ge 0, \ d_\bI \le 0, \ d_J - d_{J+1} \ge 0, \ d_\bJ - d_{\bJ+1} \le 0,
\eeq
and moreover, the number of independent active inequalities at $d$ is $n-1$. If all entries of $d$ are nonzero, then
$d$ must satisfy $d_J - d_{J+1} = 0$ and $d_\bJ - d_{\bJ+1} = 0$ (with a total number $n-1$), which implies
$d_1=d_2=\cdots=d_n$ and thus $d \in Q$. We now assume that $d$ has at least one zero entry. Then, there exist
positive integers $k$, $\{m_i\}^k_{i=1}$ and $\{n_i\}^k_{i=1}$ satisfying $m_i \le n_i < m_{i+1} \le n_{i+1}$
for $i=1,\ldots,k-1$ such that
\beq \label{zero-position}
\{i: d_i=0\} = \{m_1, \cdots, n_1\} \cup \{m_2, \cdots, n_2\} \cup \cdots \cup \{m_k,\cdots,n_k\}.
\eeq
One can immediately observe that
\beq \label{partial-eqs}
d_{m_i} = \cdots = d_{n_i} = 0, \quad d_j - d_{j+1} = 0, \quad  m_i \le j \le n_i-1, \ 1 \le i \le k.
\eeq
We next divide the rest of proof into four cases.

Case (a): $m_1=1$ and $n_k=n$. In view of \eqref{zero-position}, one can observe that $d_{m_i-1}-d_{m_i} \neq 0$ and
$d_{n_{i-1}}-d_{n_{i-1}+1} \neq 0$ for $i=2,\ldots,k$. We then see from \eqref{d-ineqs} that except the active
inequalities given in \eqref{partial-eqs}, all other possible active inequalities at $d$ are
\beq \label{other-ineqs-a}
d_j-d_{j+1}=0, \quad  n_{i-1} < j < m_{i}-1, \  2 \le i \le k
\eeq
(with a total number $\sum^k_{i=2} (m_i-n_{i-1}-2)$). Notice that the total number of independent active
inequalities given in \eqref{partial-eqs} is $\sum^k_{i=1} (n_i-m_i+1)$. Hence, the number of independent active
inequalities at $d$ is at most
\[
\sum^k_{i=1} (n_i-m_i+1) + \sum^k_{i=2} (m_i-n_{i-1}-2) = n_k - m_1 - k + 2 = n-k+1.
\]
Recall that the number of independent active inequalities at $d$ is $n-1$. Hence, we have
$n-k+1 \ge n-1$, which implies $k \le 2$. Due to $d \neq 0$, we observe that $k \neq 1$ holds for this
case. Also, we know that $k > 0$. Hence, $k=2$. We then see that all possible active inequalities described in
\eqref{other-ineqs-a} must be active at $d$, which together with $k=2$ immediately implies that $d \in Q$.

Case (b): $m_1=1$ and $n_k < n$. Using \eqref{zero-position}, we observe that $d_{m_i-1}-d_{m_i} \neq 0$
for $i=2,\ldots,k$ and $d_{n_i}-d_{n_i+1} \neq 0$ for $i=1,\ldots,k$. In view of these relations and a similar
argument as in case (a), one can see that the number of independent active inequalities at $d$ is at most
\[
\sum^k_{i=1} (n_i-m_i+1) + \sum^k_{i=2} (m_i-n_{i-1}-2) + n-n_k-1 = n - m_1 - k + 1 = n-k.
\]
Similarly as in case (a), we can conclude from the above relation that $k = 1$ and $d \in Q$.

Case (c): $m_1>1$ and $n_k = n$. By \eqref{zero-position}, one can observe that $d_{m_i-1}-d_{m_i} \neq 0$
for $i=1,\ldots,k$ and $d_{n_i}-d_{n_i+1} \neq 0$ for $i=1,\ldots,k-1$. Using these relations and a similar
argument as in case (a), we see that the number of independent active inequalities at $d$ is at most
\[
m_1-2 + \sum^k_{i=1} (n_i-m_i+1) + \sum^k_{i=2} (m_i-n_{i-1}-2) = n_k  - k = n-k.
\]
Similarly as in case (a), we can conclude from the above relation that $k = 1$ and $d \in Q$.

Case (d): $m_1>1$ and $n_k < n$. From \eqref{zero-position}, one can observe that $d_{m_i-1}-d_{m_i} \neq 0$
for $i=1,\ldots,k$ and $d_{n_i}-d_{n_i+1} \neq 0$ for $i=1,\ldots,k$. By virtue of these relations and a similar
argument as in case (a), one can see that the number of independent active inequalities at $d$ is at most
\[
m_1-2+\sum^k_{i=1} (n_i-m_i+1) + \sum^k_{i=2} (m_i-n_{i-1}-2) + n-n_k-1 = n-k-1.
\]
Recall that $k \ge 1$ and the number of independent active inequalities at $d$ is $n-1$. Hence, this case cannot occur.

Combining the above four cases, we conclude that $\ext(P_{I,J}) \subseteq Q$.
\end{proof}

\gap

\begin{lemma} \label{extray-repr}
Let $P_{IJ}$ and $Q$ be defined in \eqref{PIJ} and \eqref{d-form}, respectively. Then,
\[
\cup\left\{\ext(P_{I,J}): I \subseteq \{1,\cdots,n\}, \ J \subseteq \{1,\cdots,n-1\}\right\} = Q.
\]
\end{lemma}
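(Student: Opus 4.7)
The plan is to prove the lemma by establishing both inclusions. The direction $\cup_{I,J}\ext(P_{I,J}) \subseteq Q$ is immediate from Lemma \ref{ext-ray}(iii), so the substantive work lies in the reverse inclusion: for every $q \in Q$, I must exhibit a specific pair $(I,J)$ such that $q \in \ext(P_{I,J})$.

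Given $q = \alpha(0,\ldots,0,1,\ldots,1,0,\ldots,0)^T$ with $m$ leading zeros, $l$ ones, and $\alpha \neq 0$, I would construct the sets as follows. For $\alpha > 0$, take $I = \{1,\ldots,n\}$ (so $\bar I = \emptyset$): then $q_I \ge 0$ holds and $q_{\bar I} \le 0$ is vacuous. For $\alpha < 0$, take $I = \emptyset$. For $J$, observe that the differences $d_j := q_j - q_{j+1}$ vanish except possibly at $j=m$ (if $m \ge 1$) with $d_m = -\alpha$ and at $j = m+l$ (if $m+l \le n-1$) with $d_{m+l} = \alpha$. I would assign these two distinguished indices to $J$ or $\bar J$ to match the sign of $d_j$ with the corresponding inequality (so when $\alpha > 0$, put $m \in \bar J$ and $m+l \in J$), while all indices giving $d_j = 0$ may be placed arbitrarily. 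This choice immediately yields $q \in P_{I,J}$.

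To show $q$ is an extreme ray of $P_{I,J}$, I would count the linearly independent gradients among the active constraints at $q$, aiming for rank $n-1$. The active sign constraints contribute the standard basis vectors $e_i$ for each coordinate where $q_i = 0$, namely $i \in \{1,\ldots,m\} \cup \{m+l+1,\ldots,n\}$, giving $n-l$ vectors. The active difference constraints contribute $e_j - e_{j+1}$ for each $j$ with $d_j = 0$. The vectors corresponding to $j < m$ or $j > m+l$ already lie in the span of the collected $e_i$'s and do not raise the rank, while those for $j \in \{m+1,\ldots,m+l-1\}$ are $e_{m+1} - e_{m+2}, \ldots, e_{m+l-1} - e_{m+l}$, contributing $l-1$ new independent directions. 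The total is $(n - l) + (l - 1) = n - 1$, confirming that $q$ spans a minimal face and is therefore an extreme ray.

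The main obstacle is bookkeeping across the boundary cases $m = 0$ and/or $m+l = n$, where one or both of the indices $m$, $m+l$ lies outside the valid range $\{1,\ldots,n-1\}$ and the corresponding nonzero-difference constraint disappears. In each such case an additional difference constraint at that endpoint becomes active (since $d_j = 0$ there) and compensates for the missing sign constraint, so the rank identity $n-1$ persists. I would handle these as separate short sub-cases (including the degenerate $m=0, l=n$ case in which $q = \alpha\mathbf{1}$, where all $n-1$ difference constraints are active and span the hyperplane orthogonal to $\mathbf{1}$). Once every case yields rank $n-1$, the lemma follows.
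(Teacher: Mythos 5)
Your proposal is correct and takes essentially the same route as the paper: the forward inclusion is quoted from Lemma~\ref{ext-ray}(iii), and for the reverse inclusion you make the same choice of $(I,J)$ (with $I=\{1,\dots,n\}$ or $I=\emptyset$ according to the sign of $\alpha$), merely spelling out via the rank-$(n-1)$ count of active constraints what the paper dismisses as ``it is not hard to see.'' One minor note: no compensation is actually needed in the boundary cases $m=0$ or $m+l=n$, since your count $(n-l)+(l-1)=n-1$ holds uniformly---the out-of-range endpoint difference constraint is simply absent there, just as it is inactive in the interior case, so all sub-cases reduce to the same computation.
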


\begin{proof}
It follows from Lemma \ref{ext-ray} (iii) that
\[
\cup\left\{\ext(P_{I,J}): I \subseteq \{1,\cdots,n\}, \ J \subseteq \{1,\cdots,n-1\}\right\} \ \subseteq \ Q.
\]
We next show that
\[
\cup\left\{\ext(P_{I,J}): I \subseteq \{1,\cdots,n\}, \ J \subseteq \{1,\cdots,n-1\}\right\} \ \supseteq \ Q.
\]
Indeed, let $d\in Q$ be arbitrarily chosen. Then, there exist $\alpha \neq 0$ and positive integers $m_1$
and $n_1$ satisfying $1 \le m_1 \le n_1$ such that $d_i=\alpha$ for $m_1 \le i \le n_1$ and the rest of
$d_i$'s are 0. If $\alpha>0$, it is not hard to see that $d\in \ext(P_{I,J})$ with $I=\{1,\cdots,n\}$ and
$J=\{m_1,\cdots,n-1\}$. Similarly, if $\alpha<0$, $d\in \ext(P_{I,J})$ with $I=\emptyset$ and $J$ being the complement of $\bar{J} = \{m_1,\cdots,n-1\}$. Hence,
$d \in \cup\left\{\ext(P_{I,J}): I \subseteq \{1,\cdots,n\}, \ J \subseteq \{1,\cdots,n-1\}\right\}$.
\end{proof}

\gap

\begin{lemma} \label{repr}
Let $x\in \Re^n$, $\lambda_1$, $\lambda_2 \ge 0$ be given, and let
\[
f(y):= x^Ty - \lambda_1 \sum^n_{i=1}|y_i|-\lambda_2\sum^{n-1}_{i=1}|y_i-y_{i+1}|.
\]
Then, $f(y) \le 0$ for all $y \in \Re^n$ if and only if
$x$ satisfies the following inequalities:
\[
\left\{
\ba{l}
 |\sum^k_{j=1} x_j| \le k \lambda_1 + \lambda_2, \\ [4pt]
 |\sum^{k-1}_{j=0} x_{i+j}| \le k \lambda_1 + 2\lambda_2, \ 2 \le i \le n-k, \\ [4pt]
 |\sum^{k}_{j=1} x_{n-k+j}| \le k \lambda_1 + \lambda_2, \\ [4pt]
 |\sum^n_{j=1} x_j| \le n \lambda_1
\ea\right.
\]
for $k=1,\ldots, n-1$.
\end{lemma}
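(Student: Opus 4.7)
The plan is to exploit the piecewise linear structure of $f$, combined with the polyhedral results from Lemmas~\ref{ext-ray} and \ref{extray-repr}. For each choice of an index set $I\subseteq\{1,\ldots,n\}$ and $J\subseteq\{1,\ldots,n-1\}$, let $P_{I,J}$ be the polyhedral cone from \eqref{PIJ}; these cones cover $\Re^n$, and on each one the signs of $y_i$ and $y_i-y_{i+1}$ are fixed, so $f$ coincides on $P_{I,J}$ with a \emph{linear} function of $y$. Therefore $f(y)\le 0$ for all $y\in\Re^n$ if and only if this linear function is nonpositive on every $P_{I,J}$.

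Next I would use the cone structure. By Lemma~\ref{ext-ray}(ii), the only extreme point of $P_{I,J}$ is $0$, at which $f(0)=0$; by Minkowski's resolution theorem any element of $P_{I,J}$ is a nonnegative combination of $0$ and the extreme rays of $P_{I,J}$. Since a linear function is nonpositive on a polyhedral cone precisely when it is nonpositive on every extreme ray, $f\le 0$ everywhere is equivalent to $f(d)\le 0$ for every $d$ that is an extreme ray of some $P_{I,J}$. Lemma~\ref{extray-repr} identifies the union of all these extreme rays with the set $Q$ from \eqref{d-form}, so the condition becomes simply $f(d)\le 0$ for every $d\in Q$.

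Finally I would evaluate $f$ on a generic element of $Q$: let $d=\alpha e_{m+1,m+l}$ where $e_{m+1,m+l}$ is the $0/1$ indicator of the block of positions $m+1,\ldots,m+l$ (with $1\le l\le n$, $0\le m$, $m+l\le n$, and $\alpha\ne 0$). A direct count gives $\sum_i |d_i|=l|\alpha|$, while $\sum_i |d_i-d_{i+1}|$ equals $0$, $|\alpha|$ or $2|\alpha|$ according to whether neither, exactly one, or both endpoints of the block are interior (i.e., $m>0$ and $m+l<n$). Hence $f(d)\le 0$ for both signs of $\alpha$ is equivalent to $|\sum_{j=m+1}^{m+l} x_j|\le l\lambda_1+c\lambda_2$ with $c\in\{0,1,2\}$ in the corresponding case. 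Splitting the four geometric cases (full block; left-anchored; right-anchored; interior) matches exactly the four families of inequalities in the displayed list (with $k=l$, and $i=m+1$ in the interior case).

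I do not expect any single step to be a true obstacle; the only point that needs slight care is the justification that the equivalence $f\le 0$ on $\Re^n$ $\iff$ $f\le 0$ on each $P_{I,J}$ $\iff$ $f\le 0$ at every extreme ray is legitimate even though $f$ itself is not globally linear: this works because on each cone $P_{I,J}$ the coefficients of $f$ reduce to a fixed linear functional, so standard LP/duality reasoning applies piecewise, and the pieces together with Lemma~\ref{extray-repr} collapse the test set to $Q$.
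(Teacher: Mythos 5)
Your proposal is correct and follows essentially the same route as the paper's own proof: decompose $\Re^n$ into the cones $P_{I,J}$ on which $f$ is linear, invoke Lemma~\ref{ext-ray} and Minkowski's resolution theorem to reduce the nonpositivity test to extreme rays, collapse the test set to $Q$ via Lemma~\ref{extray-repr}, and evaluate $f$ on signed block indicators. Your final case analysis (counting $0$, $1$, or $2$ boundary jumps of the block to get the coefficient of $\lambda_2$) is in fact spelled out in more detail than the paper, which leaves that last evaluation implicit.
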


\begin{proof}
Let $P_{I,J}$ be defined in \eqref{PIJ} for any  $I \subseteq \{1,\ldots,n\}$ and
$J \subseteq \{1,\ldots,n-1\}$. We observe that
\bi
\item[(a)]
$\Re^n = \cup\left\{P_{I,J}: \ I \subseteq \{1,\ldots,n\}, \ J \subseteq \{1,\ldots,n-1\}\right\}$;
\item[(b)] $f(y) \le 0$ for all $y\in \Re^n$ if and only if $f(y) \le 0$ for all $y\in P_{I,J}$, and every
$I \subseteq \{1,\ldots,n\}$ and $J \subseteq \{1,\ldots,n-1\}$;
\item[(c)] $f(y)$ is a linear function of $y$ when restricted to the set $P_{I,J}$ for every $I \subseteq \{1,\ldots,n\}$ and $J \subseteq \{1,\ldots,n-1\}$.
\ei
If $P_{I,J}$ is bounded, we have $P_{I,J}=\{0\}$ and $f(y)=0$ for $y \in P_{I,J}$. Suppose
that $P_{I,J}$ is unbounded. By Lemma \ref{ext-ray} and Minkowski's resolution theorem,
$P_{I,J}$ equals the finitely generated cone by $\ext(P_{I,J})$. It then follows that $f(y) \le 0$ for all
$y\in P_{I,J}$ if and only if $f(d) \le 0$ for all $d\in\ext(P_{I,J})$. Using these facts and
Lemma \ref{extray-repr}, we see that $f(y) \le 0$ for all $y\in \Re^n$ if and only if $f(d) \le 0$
for all $d\in Q$, where $Q$ is defined in \eqref{d-form}. By the definitions of $Q$ and $f$, we further
observe that $f(y) \le 0$ for all $y\in \Re^n$ if and only if $f(d) \le 0$ for all
\[
d \in \left\{\pm(\underbrace{0,\cdots,0}_m,\underbrace{1,\cdots,1}_l,0,\cdots,0)^T \in \Re^n:  m \ge 0, 1 \le l \le n\right\},
\]
which together with the definition of $f$ immediately implies that the conclusion of this lemma holds.
\end{proof}

\gap

\begin{lemma}\label{thm:linearsy}
Let $x\in \Re^n$, $\lambda_1$, $\lambda_2 \ge 0$ be given. The linear system
\beq \label{lin-sys}
\left\{
\ba{l}
x_1 + \lambda_1 \gamma_1 + \lambda_2 v_1 = 0, \\ [4pt]
x_i + \lambda_1 \gamma_i + \lambda_2 (v_i-v_{i-1}) = 0,  \ 2 \le i \le n-1,  \\[ 4pt]
x_n  + \lambda_1 \gamma_n - \lambda_2 v_{n-1}  = 0, \\ [4pt]
-1 \le \gamma_i \le 1, \ i =1, \ldots, n,  \\ [4pt]
-1 \le v_i \le 1, \ i =1, \ldots, n-1 \\ [4pt]
\ea
\right.
\eeq
has a solution $(\gamma, v)$ if and only if $(x,\lambda_1,\lambda_2)$ satisfies the
following inequalities:
\[
\left\{
\ba{l}
 |\sum^k_{j=1} x_j| \le k \lambda_1 + \lambda_2, \\ [4pt]
 |\sum^{k-1}_{j=0} x_{i+j}| \le k \lambda_1 + 2\lambda_2, \ 2 \le i \le n-k, \\ [4pt]
 |\sum^{k}_{j=1} x_{n-k+j}| \le k \lambda_1 + \lambda_2, \\ [4pt]
 |\sum^n_{j=1} x_j| \le n \lambda_1
\ea\right.
\]
for $k=1,\ldots, n-1$.
\end{lemma}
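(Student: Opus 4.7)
The plan is to view the linear system \eqref{lin-sys} as a membership problem ``$-x$ lies in a certain polytope $C$'' and then dualize via the support function of $C$ to reduce the statement to the condition ``$f(y) \le 0$ for all $y$'' already characterized in Lemma \ref{repr}.

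First, I would rewrite \eqref{lin-sys} in compact matrix form. Introduce the discrete difference matrix $D \in \Re^{n\times(n-1)}$ with $D_{i,i}=1$, $D_{i+1,i}=-1$, and zeros elsewhere, so that $(Dv)_1=v_1$, $(Dv)_i=v_i-v_{i-1}$ for $2\le i\le n-1$, and $(Dv)_n=-v_{n-1}$. Then \eqref{lin-sys} is solvable iff one can choose $(\gamma,v)\in[-1,1]^n\times[-1,1]^{n-1}$ with $-x = \lambda_1\gamma + \lambda_2 Dv$, i.e.\ iff $-x \in C$, where
\[
C := \{\lambda_1\gamma + \lambda_2 Dv : \gamma\in[-1,1]^n,\ v\in[-1,1]^{n-1}\}.
\]

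Next, because $C$ is a non-empty, compact, convex set, the separation theorem yields $-x \in C$ iff $-y^T x \le \sup_{z\in C} y^T z$ for every $y\in\Re^n$. The supremum decouples as
\[
\lambda_1 \sup_{\gamma\in[-1,1]^n} y^T\gamma + \lambda_2 \sup_{v\in[-1,1]^{n-1}} (D^T y)^T v = \lambda_1\|y\|_1 + \lambda_2\|D^T y\|_1.
\]
Since $(D^T y)_i = y_i - y_{i+1}$ for $1\le i\le n-1$, this equals $g(y) := \lambda_1\sum_{i=1}^n|y_i| + \lambda_2\sum_{i=1}^{n-1}|y_i-y_{i+1}|$. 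The function $g$ is even, so replacing $y$ by $-y$ shows that solvability of \eqref{lin-sys} is equivalent to $x^T y \le g(y)$ for every $y\in\Re^n$, which is exactly $f(y)\le 0$ for all $y\in\Re^n$ with $f$ as in Lemma \ref{repr}. Lemma \ref{repr} then translates this into the desired inequalities on $(x,\lambda_1,\lambda_2)$.

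The combinatorial heart of the argument --- the extreme-ray analysis of the polyhedra $P_{I,J}$ and the piecewise-linear reduction of $f$ --- has already been carried out in Lemmas \ref{ext-ray}--\ref{repr}, so no genuine obstacle remains at this stage. The only points requiring care are the correct matrix formulation of \eqref{lin-sys} (with signs compatible with the convention $v_0 = v_n = 0$) and the support-function decomposition $\sup_{z\in C} y^T z = \lambda_1\|y\|_1 + \lambda_2\|D^T y\|_1$; both are straightforward once the difference matrix $D$ is introduced.
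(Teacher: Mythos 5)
Your proof is correct, and it arrives at the same pivotal condition as the paper --- that \eqref{lin-sys} is solvable if and only if $f(y)\le 0$ for all $y\in\Re^n$ --- after which both arguments finish identically by invoking Lemma~\ref{repr}; the difference lies in how that equivalence is justified. The paper treats solvability of \eqref{lin-sys} as feasibility of the trivial linear program \eqref{primal}, forms its Lagrangian dual \eqref{dual} (minimizing over the boxes yields exactly $-\lambda_1\sum_i|y_i|-\lambda_2\sum_i|y_i-y_{i+1}|$), and appeals to LP duality: the primal has an optimal solution iff the dual optimal value is $0$. You instead recast solvability as the membership statement $-x\in C$ for the nonempty compact convex set $C=\lambda_1[-1,1]^n+\lambda_2 D[-1,1]^{n-1}$ and characterize membership via the support function $\sigma_C(y)=\lambda_1\|y\|_1+\lambda_2\|D^Ty\|_1$, using only the separating hyperplane theorem together with the symmetry $C=-C$ (equivalently, evenness of $g$) to replace $-y^Tx$ by $x^Ty$. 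The computational core is the same in both proofs --- optimizing a linear functional over the unit boxes --- but your route is marginally more elementary: it needs no strong LP duality theorem (the paper's dual could a priori be unbounded when the primal is infeasible, which is precisely what strong duality rules out, and what strict separation handles in your version), and it makes the geometry transparent, namely that $-x$ must lie in a Minkowski sum of two scaled boxes whose support function decomposes additively. Your sign bookkeeping for the difference matrix $D$ (so that $(Dv)_1=v_1$, $(Dv)_i=v_i-v_{i-1}$ for $2\le i\le n-1$, $(Dv)_n=-v_{n-1}$, and $(D^Ty)_i=y_i-y_{i+1}$) matches \eqref{lin-sys} exactly, so no gap remains.
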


\begin{proof}
The linear system \eqref{lin-sys} has a solution if and only if the linear programming
\beq \label{primal}
\min\limits_{\gamma,v} \{0^T \gamma + 0^T v: (\gamma,v) \ \mbox{satisfies} \ \eqref{lin-sys}\}
\eeq
has an optimal solution. The Lagrangian dual of \eqref{primal} is
\[
\max\limits_y \min\limits_{\gamma,v}\left\{x^T y + \lambda_1 \sum^n_{i=1} y_i \gamma_i +
\lambda_2\sum^{n-1}_{i=1} (y_i-y_{i+1}) v_i: \ -1 \le \gamma, v \le 1\right\},
\]
which is equivalent to
\beq \label{dual}
\max\limits_{y} f(y) := x^Ty - \lambda_1 \sum^n_{i=1}|y_i|-\lambda_2\sum^{n-1}_{i=1}|y_i-y_{i+1}|.
\eeq
By the Lagrangian duality theory, problem \eqref{primal} has an optimal solution if and only if its
dual problem \eqref{dual} has optimal value $0$, which is equivalent to $f(y) \le 0$ for all
$y\in\Re^n$. The conclusion of this lemma then immediately follows from Lemma \ref{repr}.
\end{proof}

\gap

We are now ready to prove Theorem~\ref{thm:nesssuff}.

\gap

\begin{proof}
For the sake of convenience, we denote the inverse of  $\mathbf{\widehat \Theta}^{(k)}$ as
$\mathbf{\widehat W}^{(k)}$ for $k=1,\dots, K$. By  the first-order optimality
 conditions, we observe that $\mathbf{\widehat \Theta}^{(k)} \succ 0, k=1,\dots, K$ is the optimal solution of problem~\eqref{eq:fusedloglikelihood} if and only if it satisfies
\begin{eqnarray}
&&\quad\quad -\hw^{(k)}_{ii}+\bfS_{ii}^{(k)} = 0,~ 1\leq k \leq K, \label{opt-cond1} \\
&&\quad\quad -\hw^{(1)}_{ij}+\bfS_{ij}^{(1)}  + \lambda_1\gamma_{ij}^{(1)}+ \lambda_2 \upsilon_{ij}^{(1,2)} = 0 ,
\label{opt-cond2} \\
&&\quad\quad -\hw^{(k)}_{ij}+\bfS_{ij}^{(k)}  + \lambda_1\gamma_{ij}^{(k)}+ \lambda_2(-\upsilon_{ij}^{(k-1,k)} + \upsilon_{ij}^{(k,k+1)}) = 0, \ 2\leq k\leq K-1, \label{opt-cond3} \\
&&\quad\quad -\hw^{(K)}_{ij}+\bfS_{ij}^{(K)}  + \lambda_1\gamma_{ij}^{(K)}- \lambda_2 \upsilon_{ij}^{(K-1,K)} = 0 \label{opt-cond4}
\end{eqnarray}
for all $i,j=1,\dots,p, i \neq j$,
where $\gamma_{ij}^{(k)}$ is a subgradient of $|\bfTheta_{ij}^{(k)}|$ at
$\bfTheta_{ij}^{(k)}=\htheta_{ij}^{(k)}$; and  $\upsilon_{ij}^{(k,k+1)}$ is a subgradient of $|\bfTheta_{ij}^{(k)}
- \bfTheta_{ij}^{(k+1)}|$ with respect to $\bfTheta_{ij}^{(k)}$ at $(\bfTheta_{ij}^{(k)},
\bfTheta_{ij}^{(k+1)})=(\htheta_{ij}^{(k)}, \htheta_{ij}^{(k+1)})$, that is, $\upsilon_{ij}^{(k,k+1)} = 1$ if $\htheta_{ij}^{(k)} > \htheta_{ij}^{(k+1)}$, $\upsilon_{ij}^{(k,k+1)} = -1$ if $\htheta_{ij}^{(k)} < \htheta_{ij}^{(k+1)}$, and $\upsilon_{ij}^{(k,k+1)} \in [-1,1]$ if $\htheta_{ij}^{(k)} = \htheta_{ij}^{(k+1)}$.

{\bf Necessity: } Suppose that $\mathbf{\widehat \Theta}^{(k)},k=1,\dots,K$ is a block diagonal
optimal solution of problem~\eqref{eq:fusedloglikelihood} with $L$ known blocks
$C_l, l=1,\dots,L$. Note that $\mathbf{\widehat W}^{(k)}$ has the same block diagonal structure
as $\mathbf{\widehat \Theta}^{(k)}$. Hence, $\hw^{(k)}_{ij} =\htheta_{ij}^{(k)} = 0$ for
$i\in C_l, j \in C_{l'}, l\neq l'$. This together with~\eqref{opt-cond2}-\eqref{opt-cond4} implies that
for each $i\in C_l, j \in C_{l'}, l\neq l'$, there exist $(\gamma^{(k)}_{ij},v^{(k,k+1)}_{ij}), k=1,\dots,K-1$ and $\gamma^{(K)}_{ij}$ such that
\beq \label{eq:KKTforfusedwithoutW}
\begin{array}{l}
\bfS_{ij}^{(1)}  + \lambda_1\gamma_{ij}^{(1)}+ \lambda_2 \upsilon_{ij}^{(1,2)} = 0,\\ [4pt]
\bfS_{ij}^{(k)}  + \lambda_1\gamma_{ij}^{(k)}+ \lambda_2(-\upsilon_{ij}^{(k-1,k)} + \upsilon_{ij}^{(k,k+1)}) = 0, \ 2\leq k\leq K-1,\\ [4pt]
\bfS_{ij}^{(K)}  + \lambda_1\gamma_{ij}^{(K)}- \lambda_2 \upsilon_{ij}^{(K-1,K)} = 0, \\ [4pt]
-1 \le\gamma^{(k)}_{ij} \le 1, \ 1\leq k\leq K,  \\ [4pt]
-1 \le v^{(k,k+1)}_{ij} \le 1, \  \ 1\leq k\leq K-1. \\ [4pt]
\end{array}
\eeq
Using \eqref{eq:KKTforfusedwithoutW} and Lemma~\ref{thm:linearsy},  we see that
\eqref{eq:nessuarysuff} holds for $t=1,\dots,K-1$, $i\in C_l, j \in C_{l'}, l\neq l'$.

{\bf Sufficiency:}
Suppose that \eqref{eq:nessuarysuff} holds for $t=1,\dots,K-1$, $i\in C_l, j \in C_{l'}, l\neq l'$.
It then follows from  Lemma~\ref{thm:linearsy} that for each $i\in C_l, j \in C_{l'}, l\neq l'$, there
exist $(\gamma^{(k)}_{ij},v^{(k,k+1)}_{ij}), k=1,\dots,K-1$ and $\gamma^{(K)}_{ij}$ such that
\eqref{eq:KKTforfusedwithoutW} holds. Now let $\mathbf{\widehat \Theta}^{(k)},k=1,\dots,K$ be a
block diagonal matrix as defined in \eqref{block-soln} with $\bf U=I$, where ${\mathbf{\widehat \Theta}_l}=(\mathbf{\widehat \Theta}_l^{(1)},\dots,\mathbf{\widehat \Theta}_l^{(K)})$  is given by
\eqref{block-FMGL} for $l=1,\dots,L$. Also, let $\mathbf{\widehat W}^{(k)}$ be the inverse of
$\mathbf{\widehat \Theta}^{(k)}$ for $k=1,\dots, K$. Since ${\mathbf{\widehat \Theta}_l}$ is the
optimal solution of problem~\eqref{block-FMGL}, the first-order optimality conditions imply
that~\eqref{opt-cond1}-\eqref{opt-cond4} hold for all $i,j\in  C_l, i\neq j, l=1,\dots,L$. Notice that
 $\htheta_{ij}^{(k)} = \hw^{(k)}_{ij} = 0$ for every $i\in C_l, j \in C_{l'}, l\neq l'$. Using this fact
and \eqref{eq:KKTforfusedwithoutW}, we observe that~\eqref{opt-cond1}-\eqref{opt-cond4} also
hold for all $i\in C_l, j \in C_{l'}, l\neq l'$.  It then follows that $\mathbf{\widehat \Theta}^{(k)},k=1,\dots,K$ is an optimal solution of problem~\eqref{eq:fusedloglikelihood}.  In addition, $\mathbf{\widehat \Theta}^{(k)},k=1,\dots,K$  is block diagonal with $L$ known blocks $C_l, l=1,\dots,L$. The conclusion thus holds.
\end{proof}

\section{Second-order method}\label{sec:Newtonmethod}
The screening rule proposed in Section \ref{sec:screening} is capable of partitioning all features
into a group of smaller sized blocks. Accordingly, a large-scale FMGL \eqref{eq:fusedloglikelihood}
can be decomposed into a number of smaller sized FMGL problems.  For each block $l$, we
need to compute its individual estimated  precision matrix $\mathbf{\Theta}_l^{(k)}$ by solving the
FMGL \eqref{eq:fusedloglikelihood} with $\mathbf{S}^{(k)}$ replaced by $\mathbf{S}_l^{(k)}$. In this
section, we discuss how to solve those single block FMGL problems efficiently. For simplicity of
presentation, we assume throughout this section that the FMGL \eqref{eq:fusedloglikelihood}
has only one block, that is, $L=1$.

We now propose a second-order method to solve the FMGL \eqref{eq:fusedloglikelihood}. For simplicity of notation, we let $\mathbf{\Theta} :=(\mathbf{\Theta}^{(1)},\dots,\mathbf{\Theta}^{(K)})$ and use $t$ to denote the Newton iteration index. Let $\mathbf{\Theta}_t = (\mathbf{\Theta}^{(1)}_t,\dots,\mathbf{\Theta}^{(K)}_t)$ be the approximate solution obtained at the $t$-th Newton iteration.

The optimization problem  \eqref{eq:fusedloglikelihood} can be rewritten as
\begin{equation}
  \label{eq:fusedloglikelihood_concise}
 \min_{\mathbf{\Theta}\succ 0} F(\mathbf{\Theta}) := \sum_{k=1}^K f_k(\mathbf{\Theta}^{(k)}) + P(\mathbf{\Theta}),
\end{equation}
where
$$
f_k(\mathbf{\Theta}^{(k)}) =  - \log \det({\mathbf{\Theta}^{(k)}})+\tr(\mathbf{S}^{(k)}{\mathbf{\Theta}^{(k)}}).
$$
In the second-order method, we approximate the objective function $F(\mathbf{\Theta})$ at the current iterate ${\mathbf{\Theta}_{t}}$ by a ``quadratic'' model $Q_t(\mathbf{\Theta})$:
\begin{equation}
  \label{eq:fmgl_qp_app}
\min_{\mathbf{\Theta}} Q_t(\mathbf{\Theta}) :=\sum_{k=1}^K q_k(\mathbf{\Theta}^{(k)}) + P(\mathbf{\Theta}),
\end{equation}
where $q_k$ is the quadratic approximation of $f_k$ at $\mathbf{\Theta}^{(k)}_t$, that is,
\[
\quad\quad q_k(\mathbf{\Theta}^{(k)})= \frac{1}{2}\tr(\mathbf{W}^{(k)}_t\mathbf{D}^{(k)}\mathbf{W}^{(k)}_t\mathbf{D}^{(k)}) + \tr((\mathbf{S}^{(k)} - \mathbf{W}^{(k)}_t)\mathbf{D}^{(k)}) + f_k(\mathbf{\Theta}^{(k)}_t)
\]
with $\mathbf{W}^{(k)}_t= (\mathbf{\Theta}^{(k)}_t)^{-1}$ and  $\mathbf{D}^{(k)} = \mathbf{\Theta}^{(k)} - \mathbf{\Theta}^{(k)}_t$. Suppose that $\bar{\mathbf{\Theta}}_{t+1}$ is the optimal solution of \eqref{eq:fmgl_qp_app}. Then we obtain the Newton search direction
\begin{equation}
  \label{eq:fmgl_newton_d}
  \mathbf{D} = \bar{\mathbf{\Theta}}_{t+1} - \mathbf{\Theta}_{t}.
\end{equation}

We shall mention that the subproblem \eqref{eq:fmgl_qp_app} can be suitably solved by the non-monotone spectral projected gradient (NSPG) method (see, for example,
\cite{wright2009sparse,lu2011augmented}).  It was shown by Lu and Zhang \cite{lu2011augmented} that the NSPG method is locally linearly convergent. Numerous computational studies have demonstrated that the NSPG method is very efficient though its global convergence rate is so far unknown. When applied to \eqref{eq:fmgl_qp_app}, the NSPG method  requires solving the proximal  subproblems in the form of
\begin{equation}
  \label{eq:fmgl_newton_prox}
\min_{\mathbf{\Theta}} \frac{1}{2}\sum_{k=1}^K \|\mathbf{\Theta}^{(k)} - \mathbf{G}^{(k)}\|^2_F + \alpha P(\mathbf{\Theta})
\end{equation}
for some $\mathbf{G} = (\mathbf{G}^{(1)},\dots,\mathbf{G}^{(K)})$ and $\alpha >0$.
By the definition of $P(\mathbf{\Theta})$, it is not hard to see that problem \eqref{eq:fmgl_newton_prox} can be decomposed into a set of independent and smaller sized problems
\begin{equation}
  \label{eq:fusedlasso}
\quad\quad\min_{\bfTheta^{(k)}_{ij},k=1,\dots,K} \frac{1}{2}\sum_{k=1}^K (\bfTheta^{(k)}_{ij} - \mathbf{G}^{(k)}_{ij})^2 + \alpha_1 \sum_{k=1}^K|\bfTheta_{ij}^{(k)}| + \alpha_2 \sum_{k=1}^{K-1}|\bfTheta_{ij}^{(k)} - \bfTheta_{ij}^{(k+1)}|
\end{equation}
for all $ i \ge j, \ j=1,\dots,p$, where $(\alpha_1,\alpha_2) = \alpha (\lambda_1,\lambda_2)$.
The problem~\eqref{eq:fusedlasso} is known as the fused lasso signal approximator, which can be solved very efficiently and exactly~\cite{condat2012direct,liu2010efficient}. 
 In addition, they are independent from each other and thus can be solved in parallel.

Given the current search direction $\mathbf{D}=(\mathbf{D}^{(1)},\dots,\mathbf{D}^{(K)})$
that is computed above, we need to find the suitable step length $\beta\in(0,1]$ to ensure a sufficient reduction in the objective function of \eqref{eq:fusedloglikelihood}  and positive definiteness of the next iterate
$\mathbf{\Theta}^{(k)}_{t+1} = \mathbf{\Theta}^{(k)}_t + \beta\mathbf{D}^{(k)},k=1,\dots,K$.
In the context of the standard (single) graphical lasso, Hsieh et al.~\cite{hsieh2011sparse} have shown that a step length satisfying the above requirements always exists. We can similarly prove
that the desired step length also exists for the FMGL \eqref{eq:fusedloglikelihood}.

\begin{lemma}\label{thm:linesearch}
Let $\mathbf{\Theta}_t = (\mathbf{\Theta}^{(1)}_t,\dots,\mathbf{\Theta}^{(K)}_t)$ be such that
${\mathbf{\Theta}}^{(k)}_t\succ0$ for $k=1,\dots, K$, and let $\mathbf{D}=(\mathbf{D}^{(1)},\dots,\mathbf{D}^{(K)})$ be the associated Newton search direction computed according to \eqref{eq:fmgl_qp_app}. Suppose $\mathbf{D} \neq 0$.\footnote {It is well known that if $\mathbf{D}=0$, $\mathbf{\Theta}_t$ is the optimal solution of problem \eqref{eq:fusedloglikelihood}.} Then there exists a $\bar \beta  >0$ such that ${\mathbf{\Theta}}^{(k)}_t+\beta {\mathbf{D}}^{(k)}\succ0$ and the sufficient reduction condition
\begin{equation}
  \label{eq:linesearch}
  F(\mathbf{\Theta}_{t} + \beta\mathbf{D})\leq F(\mathbf{\Theta}_{t}) + \sigma\beta\delta
\end{equation}
holds for all $0<\beta < \bar \beta$, where $\sigma\in(0,1)$ is a given constant and
\[
\delta =\sum_{k=1}^K \tr((\mathbf{S}^{(k)} - \mathbf{W}^{(k)}_t)\mathbf{D}^{(k)}) +P(\mathbf{\Theta}_{t} + \mathbf{D}) - P(\mathbf{\Theta}_{t}).
\]
\end{lemma}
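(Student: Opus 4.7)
The plan is to mimic the line-search existence argument for standard graphical lasso (Hsieh et al.) and extend it to the FMGL setting. There are two things to establish for small enough $\beta>0$: the iterate stays in the positive definite cone, and the Armijo-type descent condition holds. The first is easy from continuity; the second will reduce to showing $\delta<0$ and then controlling the first-order expansion.

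First, I would handle feasibility. Since $\mathbf{\Theta}_t^{(k)}\succ 0$ for every $k$, by continuity of $\lambda_{\min}$ there exists $\beta_1>0$ such that $\mathbf{\Theta}_t^{(k)}+\beta\mathbf{D}^{(k)}\succ 0$ for all $k$ and all $\beta\in(0,\beta_1)$. This is independent of the objective and just uses that the positive definite cone is open.

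Next, I would show $\delta<0$. The key observation is that $\bar{\mathbf{\Theta}}_{t+1}$ minimizes the quadratic model $Q_t$ in \eqref{eq:fmgl_qp_app}, and the Hessian block $\tr(\mathbf{W}_t^{(k)}\mathbf{D}^{(k)}\mathbf{W}_t^{(k)}\mathbf{D}^{(k)})$ is strictly positive whenever $\mathbf{D}^{(k)}\neq 0$ since $\mathbf{W}_t^{(k)}=(\mathbf{\Theta}_t^{(k)})^{-1}\succ 0$. Hence $Q_t$ is strictly convex on its effective domain, so the assumption $\mathbf{D}\neq 0$ together with $Q_t(\mathbf{\Theta}_t)=F(\mathbf{\Theta}_t)$ yields $Q_t(\bar{\mathbf{\Theta}}_{t+1})<F(\mathbf{\Theta}_t)$. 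Expanding $Q_t(\bar{\mathbf{\Theta}}_{t+1})$ explicitly, this inequality becomes
\[
\tfrac{1}{2}\sum_{k=1}^K\tr(\mathbf{W}_t^{(k)}\mathbf{D}^{(k)}\mathbf{W}_t^{(k)}\mathbf{D}^{(k)})+\delta<0,
\]
so in particular $\delta<-\tfrac{1}{2}\sum_k\tr(\mathbf{W}_t^{(k)}\mathbf{D}^{(k)}\mathbf{W}_t^{(k)}\mathbf{D}^{(k)})<0$.

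Now for the descent condition. I would split $F$ into the smooth part $\sum_k f_k$ and the nonsmooth convex part $P$. Convexity of $P$ gives $P(\mathbf{\Theta}_t+\beta\mathbf{D})-P(\mathbf{\Theta}_t)\le\beta[P(\mathbf{\Theta}_t+\mathbf{D})-P(\mathbf{\Theta}_t)]$ for $\beta\in[0,1]$. For the smooth part, a second-order Taylor expansion of $f_k$ about $\mathbf{\Theta}_t^{(k)}$ (valid on the open set where $\mathbf{\Theta}_t^{(k)}+\beta\mathbf{D}^{(k)}\succ 0$, with gradient $\mathbf{S}^{(k)}-\mathbf{W}_t^{(k)}$) gives
\[
f_k(\mathbf{\Theta}_t^{(k)}+\beta\mathbf{D}^{(k)})-f_k(\mathbf{\Theta}_t^{(k)})=\beta\tr((\mathbf{S}^{(k)}-\mathbf{W}_t^{(k)})\mathbf{D}^{(k)})+O(\beta^2).
\]
Summing and combining, $F(\mathbf{\Theta}_t+\beta\mathbf{D})-F(\mathbf{\Theta}_t)\le\beta\delta+C\beta^2$ for some constant $C$ depending on $\mathbf{\Theta}_t$ and $\mathbf{D}$. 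Since we want this bounded by $\sigma\beta\delta$, we need $(1-\sigma)\beta\delta+C\beta^2\le 0$; because $\delta<0$ and $\sigma\in(0,1)$, this holds for all $\beta\in(0,\beta_2)$ with $\beta_2:=(1-\sigma)|\delta|/C$ (or any upper bound if $C\le 0$). Taking $\bar\beta=\min(\beta_1,\beta_2,1)$ completes the argument.

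The main obstacle, if any, is the nonsmooth term $P$: a naive Taylor expansion is unavailable. The trick is to use its convexity to bound $P(\mathbf{\Theta}_t+\beta\mathbf{D})$ by a linear interpolant between $P(\mathbf{\Theta}_t)$ and $P(\mathbf{\Theta}_t+\mathbf{D})$, which is precisely what makes $\delta$ (as defined in the lemma) the correct first-order surrogate. Everything else is either standard (Taylor expansion of $\log\det$) or inherited from the single-graph argument of Hsieh et al., applied blockwise across $k=1,\dots,K$.
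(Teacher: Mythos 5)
Your proposal is correct and follows essentially the same route as the paper's proof: positive definiteness for small $\beta$, negativity of $\delta$ from optimality of the subproblem \eqref{eq:fmgl_qp_app}, and the convexity bound $P(\mathbf{\Theta}_t+\beta\mathbf{D})\le\beta P(\mathbf{\Theta}_t+\mathbf{D})+(1-\beta)P(\mathbf{\Theta}_t)$ combined with a first-order expansion of the smooth part to get $F(\mathbf{\Theta}_t+\beta\mathbf{D})-F(\mathbf{\Theta}_t)\le\beta\delta+o(\beta)$. The only cosmetic differences are that the paper gives an explicit feasibility threshold $\tilde\beta=1/\max_k\|(\mathbf{\Theta}_t^{(k)})^{-1}\mathbf{D}^{(k)}\|_2$ and states the slightly sharper bound $\delta\le-\sum_k\tr(\mathbf{W}_t^{(k)}\mathbf{D}^{(k)}\mathbf{W}_t^{(k)}\mathbf{D}^{(k)})$, whereas your strict-convexity comparison yields the factor $\tfrac12$ version, which suffices equally well since all that is needed is $\delta<0$.
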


\begin{proof}
Let $\tilde \beta= 1/\max\{\|({\mathbf{\Theta}}^{(k)}_t)^{-1}{\mathbf{D}}^{(k)}\|_2: k=1,\dots,K\}$, where $\|\cdot\|_2$ denotes the spectral norm of a matrix.
Since $\mathbf{D} \neq 0$ and ${\mathbf{\Theta}}^{(k)}_t\succ0, k=1,\dots, K$, we see that $\tilde \beta >0$. Moreover, we have for all $0<\beta < \tilde \beta$ and $k=1,\dots, K$,
\[
\begin{array}{lcl}
({\mathbf{\Theta}}^{(k)}_t)^{-\frac12}\left({\mathbf{\Theta}}^{(k)}_t+\beta {\mathbf{D}}^{(k)}\right) ({\mathbf{\Theta}}^{(k)}_t)^{-\frac12} &=& {\mathbf I} + \beta ({\mathbf{\Theta}}^{(k)}_t)^{-\frac12}{\mathbf{D}}^{(k)}({\mathbf{\Theta}}^{(k)}_t)^{-\frac12} \\ [4pt]
& \succeq & (1-\beta \|({\mathbf{\Theta}}^{(k)}_t)^{-1}{\mathbf{D}}^{(k)}\|_2) {\mathbf I} \ \succ 0.
\end{array}
\]
By the definition of $\mathbf D$ and \eqref{eq:fmgl_qp_app}, one can easily show that
\[
\delta\ \leq \ -\sum_{k=1}^K\tr(\mathbf{W}^{(k)}_t\mathbf{D}^{(k)}\mathbf{W}^{(k)}_t\mathbf{D}^{(k)}),
\]
which together with the fact that $\mathbf{W}^{(k)}_t \succ 0, k=1,\dots, K$ and ${\mathbf D} \neq 0$ implies that $\delta <0$. Using differentiability of $f_k$, convexity of $P$, and the definition of $\delta$, we obtain that for all sufficiently small $\beta>0$,
\[
\ba{l}
F(\mathbf{\Theta}_{t} + \beta\mathbf{D})- F(\mathbf{\Theta}_{t}) \ =\ \sum_{k=1}^K(f_{k}(\mathbf{\Theta}^{(k)}_t+\beta\mathbf{D}^{(k)})-f_k(\mathbf{\Theta}^{(k)}_t))
+ P(\mathbf{\Theta}_{t} + \beta\mathbf{D}) - P(\mathbf{\Theta}_{t}), \\ [4pt]
= \sum_{k=1}^K\tr((\mathbf{S}^{(k)} - \mathbf{W}^{(k)}_t)\mathbf{D}^{(k)}) \beta+ o(\beta) +
P(\beta(\mathbf{\Theta}_{t} + \mathbf{D})+(1-\beta)\mathbf{\Theta}_{t}) - P(\mathbf{\Theta}_{t}), \\ [4pt]
\le \sum_{k=1}^K\tr((\mathbf{S}^{(k)} - \mathbf{W}^{(k)}_t)\mathbf{D}^{(k)}) \beta+ o(\beta) +
\beta P(\mathbf{\Theta}_{t} + \mathbf{D})+(1-\beta)P(\mathbf{\Theta}_{t}) - P(\mathbf{\Theta}_{t}), \\ [4pt]
\leq \beta \delta + o(\beta).
\ea
\]
This inequality together with $\delta<0$ and $\sigma \in (0,1)$ implies that there exists $\hat \beta >0$ such that
for all $\beta \in (0,\hat\beta)$, $F(\mathbf{\Theta}_{t} + \beta\mathbf{D})- F(\mathbf{\Theta}_{t})  \le  \sigma \beta \delta$. It then follows that the conclusion of this lemma holds for $\bar\beta = \min\{\tilde \beta, \hat\beta\}$.
\end{proof}

\gap

By virtue of Lemma~\ref{thm:linesearch}, we can adopt the well-known Armijo's backtracking line search rule~\cite{tseng2009coordinate} to select a step length $\beta \in (0,1]$ so that
${\mathbf{\Theta}}^{(k)}_t+\beta {\mathbf{D}}^{(k)}\succ0$ and
 \eqref{eq:linesearch} holds. In particular,  we choose $\beta$ to be the largest number of the sequence $\{1,1/2,\dots,1/2^i,\dots\}$ that satisfies these requirements.
We can use the Cholesky factorization to check the positive definiteness of
$\mathbf{\Theta}^{(k)}_{t} + \beta\mathbf{D}^{(k)},k=1,\dots,K$~\cite{hsieh2011sparse}. In addition, the associated terms  $\log\det(\mathbf{\Theta}^{(k)}_{t} + \beta\mathbf{D}^{(k)})$ and $(\mathbf{\Theta}^{(k)}_{t} + \beta\mathbf{D}^{(k)})^{-1}$ can be efficiently computed as a byproduct of the Cholesky decomposition of $\mathbf{\Theta}^{(k)}_{t} + \beta\mathbf{D}^{(k)}$.

\subsection{Shrinking scheme}
Given the large number of unknown variables in \eqref{eq:fmgl_qp_app}, it is advantageous to minimize~\eqref{eq:fmgl_qp_app} in a reduced space. The issue now is how to identify the reduced space. In the case of a single graph ($K=1$), problem \eqref{eq:fmgl_qp_app} degenerates to a lasso problem of size $p^2$. Hsieh et al.~\cite{hsieh2011sparse} proposed a strategy to determine a subset of variables that are allowed to be updated in each Newton iteration for single graphical lasso. Specifically, the $p^2$ variables in single graphical lasso are partitioned into two sets, $J_{free}$ and $J_{fixed}$, based on the gradient at the start of each Newton iteration, and then the minimization is only performed on the variables in $J_{free}$. We call this technique ``shrinking'' in this paper. Due to the sparsity of the precision matrix, the size of $J_{free}$ is usually much smaller than $p^2$. Moreover, it has been shown in the single graph case that the size of $J_{free}$ will decrease quickly~\cite{hsieh2011sparse}. The shrinking technique can thus improve the computational efficiency. This technique was also successfully used in \cite{joachims1999making,olsen2012newton,yuan2012improved}. We show that shrinking can be extended to the fused multiple graphical lasso based on the results established in Section~\ref{sec:screening}.

Denote the gradient of $f_k$ at $t$-th iteration by $\widetilde {\mathbf{G}}_t^{(k)} = \mathbf{S}^{(k)} - \mathbf{W}_t^{(k)}$, {and its $(i,j)$-th element by $\widetilde {\mathbf{G}}_{t,ij}^{(k)}$.} Then we have the following result.

\gap

\begin{lemma}
\label{thm:fgml_lscreening}
For ${\mathbf \Theta}_t$ in the $t$-th iteration, define the fixed set $J_{fixed}$ as
\[
\ba{l}
J_{fixed} = \{(i,j)|\bfTheta_{t,ij}^{(1)} = \dots =\bfTheta_{t,ij}^{(K)}=0 \mbox{ and } \widetilde {\mathbf{G}}_{t,ij}^{(1)},\dots,\widetilde {\mathbf{G}}_{t,ij}^{(K)} \mbox{ satisfy the inequalities} \\ ~~~~~~~~~~\mbox{ below}\}.
\ea
\]
\begin{equation}
  \label{eq:localscreening}
  \left\{
\ba{l}
 |\sum^u_{k=1} \widetilde {\mathbf{G}}_{t,ij}^{(k)}| < u \lambda_1 + \lambda_2, \\ [4pt]
 |\sum^{u-1}_{k=0} \widetilde {\mathbf{G}}_{t,ij}^{(r+k)}| < u \lambda_1 + 2\lambda_2, \ 2 \le r \le K-u, \\ [4pt]
 |\sum^{u}_{k=1} \widetilde {\mathbf{G}}_{t,ij}^{(K-u+k)}| < u \lambda_1 + \lambda_2, \\ [4pt]
 |\sum^K_{k=1} \widetilde {\mathbf{G}}_{t,ij}^{(k)}| < K \lambda_1
\ea\right.
\end{equation}
for $u=1,\dots, K-1.$

Then, the solution of the following optimization problem is $\mathbf{D}^{(1)}=\dots=\mathbf{D}^{(K)}=0:$
\begin{equation}
  \label{eq:fmgl_qp_app_const}
  \min_{\mathbf{D}} Q_t(\mathbf{\Theta}_t+\mathbf{D}) \mbox{ such that } \mathbf{D}_{ij}^{(1)}=\dots=\mathbf{D}_{ij}^{(K)}=0,(i,j)\notin J_{fixed}.
\end{equation}
\end{lemma}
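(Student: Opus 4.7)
Proof plan for Lemma~\ref{thm:fgml_lscreening}.

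The plan is to exploit convexity: problem~\eqref{eq:fmgl_qp_app_const} is a convex quadratic problem with a separable polyhedral penalty, so it suffices to verify that $\mathbf{D}=0$ satisfies the first-order optimality conditions of the restricted problem. Concretely, I will (i) compute the gradient of the smooth part $\sum_k q_k$ at $\mathbf{\Theta}=\mathbf{\Theta}_t$ (i.e.\ at $\mathbf{D}=0$), (ii) write down the subdifferential of $P$ at $\mathbf{\Theta}_t$ for the entries $(i,j)\in J_{fixed}$, where by definition $\bfTheta_{t,ij}^{(k)}=0$ for every $k$, and (iii) observe that the resulting system is precisely the one treated in Lemma~\ref{thm:linearsy}.

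First, $\nabla q_k(\mathbf{\Theta}^{(k)}) = \mathbf{W}^{(k)}_t \mathbf{D}^{(k)}\mathbf{W}^{(k)}_t + (\mathbf{S}^{(k)}-\mathbf{W}^{(k)}_t)$, so at $\mathbf{D}=0$ the gradient equals $\widetilde{\mathbf{G}}_t^{(k)} = \mathbf{S}^{(k)}-\mathbf{W}^{(k)}_t$. For a fixed pair $(i,j)\in J_{fixed}$, since $\bfTheta_{t,ij}^{(1)}=\cdots=\bfTheta_{t,ij}^{(K)}=0$, the $\ell_1$ subgradient in coordinate $k$ is an arbitrary $\gamma^{(k)}_{ij}\in[-1,1]$, and the fused subgradient associated with the pair $(k,k+1)$ is an arbitrary $\upsilon^{(k,k+1)}_{ij}\in[-1,1]$. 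Because $P(\mathbf\Theta)$ is separable across index pairs $(i,j)$, the only optimality conditions to verify in the restricted problem are those corresponding to $(i,j)\in J_{fixed}$ (entries outside $J_{fixed}$ are held fixed and their conditions are absorbed by Lagrange multipliers that can be chosen freely). Thus $\mathbf{D}=0$ is optimal in~\eqref{eq:fmgl_qp_app_const} if and only if, for every $(i,j)\in J_{fixed}$, there exist $\gamma^{(k)}_{ij}\in[-1,1]$ and $\upsilon^{(k,k+1)}_{ij}\in[-1,1]$ such that the $K$ equations \eqref{opt-cond2}--\eqref{opt-cond4}, with $\widehat{\mathbf W}^{(k)}_{ij}$ replaced by $\mathbf W^{(k)}_{t,ij}$ (so that the left-hand side is $\widetilde{\mathbf G}_{t,ij}^{(k)}$ plus the subgradient terms), are satisfied.

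This is exactly the linear system treated in Lemma~\ref{thm:linearsy}, applied with $x=(\widetilde{\mathbf G}_{t,ij}^{(1)},\dots,\widetilde{\mathbf G}_{t,ij}^{(K)})$ and $n=K$. By that lemma, solvability is equivalent to the four families of inequalities in the conclusion of Lemma~\ref{thm:linearsy} holding in their non-strict form for $u=1,\dots,K-1$. These are implied by the strict inequalities~\eqref{eq:localscreening} defining $J_{fixed}$. Hence the required $(\gamma,\upsilon)$ exist for each $(i,j)\in J_{fixed}$, and $\mathbf{D}=0$ satisfies the Karush--Kuhn--Tucker conditions of the convex problem~\eqref{eq:fmgl_qp_app_const}, so it is the (unique) minimizer.

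The main step requiring care is the passage from \eqref{eq:fusedloglikelihood_concise} to the subproblem optimality conditions at $\mathbf{D}=0$: one must note that the quadratic term $\tfrac12\tr(\mathbf W^{(k)}_t\mathbf D^{(k)}\mathbf W^{(k)}_t\mathbf D^{(k)})$ and its gradient both vanish there, so the gradient of $\sum_k q_k$ reduces cleanly to $(\widetilde{\mathbf G}_t^{(k)})_{k=1}^K$, and then apply the separability of $P$ over entries $(i,j)$ to reduce to the scalar system already solved by Lemma~\ref{thm:linearsy}. Once this reduction is made, the result is a direct corollary of that lemma.
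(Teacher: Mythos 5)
Your proposal is correct and follows essentially the same route as the paper: both arguments reduce, via the separability of $P$ across entries $(i,j)$ and the fact that the quadratic term $\frac12\tr(\mathbf{W}^{(k)}_t\mathbf{D}^{(k)}\mathbf{W}^{(k)}_t\mathbf{D}^{(k)})$ is nonnegative with vanishing gradient at $\mathbf{D}=0$, to checking that $0$ lies in the subdifferential of the linearized-plus-penalty problem at each $(i,j)\in J_{fixed}$, which is exactly the solvability of the linear system in Lemma~\ref{thm:linearsy} with $x=(\widetilde{\mathbf{G}}_{t,ij}^{(1)},\dots,\widetilde{\mathbf{G}}_{t,ij}^{(K)})$, guaranteed by the (strict) inequalities defining $J_{fixed}$. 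The only cosmetic difference is that the paper phrases this in vectorized form and drops the quadratic term as a lower bound rather than invoking KKT conditions directly, so no gap remains.
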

\begin{proof}
Consider problem \eqref{eq:fmgl_qp_app_const}, which can be reformulated to
\begin{equation}
  \label{eq:fmgl_qp_app2}
  \begin{array}{ll}
   \min_{\mathbf{D}} & \sum_{k=1}^K\left(\frac{1}{2}\vec(\mathbf{D}^{(k)})^T\mathbf{H}_t^{(k)}\vec(\mathbf{D}^{(k)}) + \vec(\widetilde {\mathbf{G}}_t^{(k)})^T\vec(\mathbf{D}^{(k)})\right)\\
   & + P(\mathbf{\Theta}_t+\mathbf{D}),\\
   &s.t.~~~\mathbf{D}_{ij}^{(1)}=\dots=\mathbf{D}_{ij}^{(K)}=0,(i,j)\notin J_{fixed},
   \end{array}
\end{equation}
where $\mathbf{H}_t^{(k)} = \mathbf{W}_t^{(k)}\otimes\mathbf{W}_t^{(k)}$. Because of the constraint $\mathbf{D}_{ij}^{(1)}=\dots=\mathbf{D}_{ij}^{(K)}=0,(i,j)\notin J_{fixed}$, we only consider the variables in the set $J_{fixed}$. According to Lemma~\ref{thm:linearsy}, it is easy to see that $\mathbf{D}_{J_{fixed}}=0$ satisfies the optimality condition of the following problem
\[
   \min_{\mathbf{D}_{J_{fixed}}} \sum_{k=1}^K \vec(\widetilde {\mathbf{G}}_{t,J_{fixed}}^{(k)})^T\vec(\mathbf{D}_{J_{fixed}}^{(k)})+ P(\mathbf{D}_{J_{fixed}}).
\]
Since $\sum_{k=1}^K\vec(\mathbf{D}^{(k)})^T\mathbf{H}_t^{(k)}\vec(\mathbf{D}^{(k)}) \geq 0$, the optimal solution of \eqref{eq:fmgl_qp_app_const} is given by $\mathbf{D}^{(1)}=\dots=\mathbf{D}^{(K)}=0$.
\end{proof}

{Lemma~\ref{thm:fgml_lscreening} provides a shrinking scheme to partition the variables into the free set $J_{free}$ and the fixed set $J_{fixed}$. 
With shrinking, each Newton step of the proposed second-order method falls into a block coordinate gradient descent framework \cite{tseng2009coordinate}. Lemma \ref{thm:fgml_lscreening} shows that when the variables in the free set $J_{free}$ are fixed, no update is needed for the variables in the fixed set $J_{fixed}$. Minimization of \eqref{eq:fmgl_qp_app} restricted to the free set can therefore guarantee the convergence to the unique optimal solution \cite{hsieh2011sparse,tseng2009coordinate}.
In addition, it has been shown that local quadratic convergence rate can be achieved when the exact Hessian is used (see, for example, \cite{hsieh2011sparse,lee2012proximal}).}

{The resulting second-order method for solving the fused multiple graphical lasso is summarized in Algorithm \ref{alg:fusedMultiGraph}.}

\begin{algorithm}[h]\label{alg:fusedMultiGraph}
    \KwIn{$\mathbf{S}^{(k)},k=1,\dots, K,\lambda_1,\lambda_2$}
    \KwOut{$\mathbf{\Theta}^{(k)}, k=1,\dots,K$}
    Initialization: $\mathbf{\Theta}^{(k)}_0=(\mbox{diag}(\mathbf{S}^{(k)}))^{-1}$\;
  \While{Not Converged}{
  Determine the sets of free and fixed indices $J_{free}$ and $J_{fixed}$ using Lemma \ref{thm:fgml_lscreening}.\\
   Compute the Newton direction $\mathbf{D}^{(k)}, k=1,\dots, K$ by solving \eqref{eq:fmgl_qp_app} and \eqref{eq:fmgl_newton_d} over the free variables $J_{free}$.\\
   Choose $\mathbf{\Theta}^{(k)}_{t+1}$ by performing the Armijo backtracking line search along $\mathbf{\Theta}^{(k)}_t+\beta \mathbf{D}^{(k)}$ for $k=1,\dots,K$.
    }
    return {$\mathbf{\Theta}^{(k)}, k=1,\dots,K$}\;
    \caption{Proposed second-order method for Fused Multiple Graphical Lasso (FMGL)}
\end{algorithm}

\section{Experimental results\label{sec:expR}}
In this section, we evaluate the proposed algorithm and screening rule on synthetic datasets and two real datasets: ADHD-200\footnote{\url{http://fcon_1000.projects.nitrc.org/indi/adhd200/}} and FDG-PET images\footnote{\url{http://adni.loni.ucla.edu/}}. The experiments are performed on a PC with quad-core Intel 2.67GHz CPU and 9GB memory.

\subsection{Simulation}

\subsubsection{Efficiency}
We conduct experiments to demonstrate the effectiveness of the proposed screening rule and the efficiency of our method FMGL. The following algorithms are included in our comparisons:
\begin{itemize}
  \item FMGL: the proposed second-order method in Algorithm~\ref{alg:fusedMultiGraph}.
  \item ADMM: ADMM method.
  \item FMGL-S: FMGL with screening.
  \item ADMM-S: ADMM with screening.
\end{itemize}
Both FMGL and ADMM are written in Matlab. Since both methods involve solving~\eqref{eq:fmgl_newton_prox} which involves a double loop, we implement the sub-routine for solving~\eqref{eq:fmgl_newton_prox} in C for a fair comparison.

The synthetic covariance matrices are generated as follows. We first generate $K$ block diagonal ground truth precision matrices $\mathbf{\Theta}^{(k)}$ with $L$ blocks, and each block $\mathbf{\Theta}_l^{(k)}$ is of size $(p/L)\times (p/L)$. Each $\mathbf{\Theta}_l^{(k)},l=1,\dots,L, k=1,\dots,K$ has random sparsity structures. We control the number of nonzeros in each $\mathbf{\Theta}_l^{(k)}$ to be about $10p/L$ so that the total number of nonzeros in the $K$ precision matrices is $10Kp$. Given the precision matrices, we draw $5p$ samples from each Gaussian distribution to compute the sample covariance matrices. The fused penalty parameter $\lambda_2$ is fixed to 0.1, and the $\ell_1$ regularization parameter $\lambda_1$ is selected so that the total number of nonzeros in the solution is about $10Kp$. We terminate the NSPG in the FMGL when the relative error $\frac{\max\{\|\mathbf{\Theta}^{(k)}_r-\mathbf{\Theta}^{(k)}_{r-1}\|_\infty\}}{\max\{\|\mathbf{\Theta}^{(k)}_{r-1}\|_\infty\}}\leq 1e\mbox{-6}.$ The FMGL is terminated when the relative error of the objective value is smaller than $1e\mbox{-5}$, and ADMM stops until it achieves an objective value equal to or smaller than that of FMGL. The results presented in Table~\ref{tb:efficiency} show that FMGL is consistently faster than ADMM. FMGL converges much more quickly than ADMM. Moreover, the screening rule can achieve great computational gain. The speedup with the screening rule is about 10 and 20 times for $L=5$ and $10$ respectively.
\begin{table*}[h!]
\caption{Comparison of the proposed FMGL and ADMM with and without screening in terms of average computational time (seconds). FMGL-S and ADMM-S are FMGL and ADMM with screening respectively. $p$ stands for the dimension, $K$ is the number of graphs, $L$ is the number of blocks, and $\lambda_1$ is the $\ell_1$ regularization parameter. The fused penalty parameter $\lambda_2$ is fixed to 0.1. $\|\mathbf{\Theta}\|_0$ represents the total number of nonzero entries in ground truth precision matrices $\mathbf{\Theta}^{(k)}, k = 1,\dots,K$, and $\|\mathbf{\Theta}^*\|_0$ is the number of nonzeros in the solution.}
\centering\label{tb:efficiency}
\resizebox{\linewidth}{!}{%
\begin{tabular}{ |l|c|c|c|c|c|c|c|c|c| }
\hline
\multicolumn{6}{ |c| }{Data and parameter setting} & \multicolumn{4}{ |c| }{Computational time (iteration numbers)}\\
\hline
$p$ & $K$ & $L$ & $\|\mathbf{\Theta}\|_0$ & $\lambda_1$ & $\|\mathbf{\Theta}^*\|_0$ & FMGL-S & FMGL & ADMM-S & ADMM\\
\hline
500 & \multirow{2}{*}{{2}} & \multirow{6}{*}{{5}} & 9766 & 0.08 & 10228 &{\bf 0.86 }& 11.19 (6)& 13.78 & 98.89 (152)\\
\cline{1-1} \cline{4-10} 1000 & & &  19832 & 0.088 & 19322 & {\bf 6.21}& 57.78 (6)& 58.75& 529.36 (140)\\
\cline{1-2} \cline{4-10} 500 & \multirow{2}{*}{{5}} & & 24494 & 0.055 & 23878 & {\bf 2.62} & 33.15 (6) & 34.85  & 256.33 (146)\\
\cline{1-1} \cline{4-10}  1000 & & & 50836 & 0.054 & 44724 & {\bf 14.70} & 197.53 (6) & 171.68 & 1431.91 (150) \\
\cline{1-2} \cline{4-10} 500 & \multirow{2}{*}{{10}} & & 49500 & 0.051 & 45756 & {\bf 6.01} & 70.87 (6) & 73.42  & 524.84 (152) \\
\cline{1-1} \cline{4-10}  1000 & & & 100292 & 0.046 & 86774 & {\bf 30.49} & 383.46 (6) & 357.34 & 2991.16 (155) \\ \hline\hline
500 & \multirow{2}{*}{{2}} & \multirow{6}{*}{{10}} & 9528 & 0.07 & 9884 & {\bf 0.81} & 16.25 (7)& 5.01 & 109.24 (155)\\
\cline{1-1} \cline{4-10} 1000 & & &  19658 & 0.08 & 20612 & {\bf 1.65}& 75.44 (6)& 25.89& 560.75 (155)\\
\cline{1-2} \cline{4-10} 500 & \multirow{2}{*}{{5}} & & 23562 & 0.055 & 23600 & {\bf 1.69} & 47.32 (7) & 11.17 & 261.00 (153)\\
\cline{1-1} \cline{4-10}  1000 & & & 49274 & 0.054 & 46582 & {\bf 5.72} & 207.97 (6) & 75.61 & 1661.24 (172) \\
\cline{1-2} \cline{4-10} 500 & \multirow{2}{*}{{10}} & & 47364 & 0.051 & 48360 & {\bf 3.70} & 103.54 (6) & 24.75& 552.09 (157) \\
\cline{1-1} \cline{4-10}  1000 & & & 98650 & 0.046 & 96216 & {\bf 12.16} & 409.94 (6) & 150.62 & 3192.02 (168) \\ \hline
\end{tabular}}
\end{table*}

\subsubsection{Stability}
We conduct experiments to demonstrate the effectiveness of FMGL. The synthetic sparse precision matrices are generated in the following way: we set the first precision matrix $\mathbf{\Theta}^{(1)}$ as $0.25I_{p\times p}$, where $p=100$. When adding an edge $(i,j)$ in the graph, we add $\sigma$ to $\theta_{ii}^{(1)}$ and $\theta_{jj}^{(1)}$, and subtract $\sigma$ from $\theta_{ij}^{(1)}$ and $\theta_{ji}^{(1)}$ to keep the positive definiteness of $\mathbf{\Theta}^{(1)}$, where $\sigma$ is uniformly drawn from $[0.1,0.3]$. When deleting an edge $(i,j)$ from the graph, we reverse the above steps with $\sigma=\theta^{(1)}_{ij}$. We randomly assign 200 edges for $\mathbf{\Theta}^{(1)}$. $\mathbf{\Theta}^{(2)}$ is obtained by adding 25 edges and deleting 25 different edges from $\mathbf{\Theta}^{(1)}$. $\mathbf{\Theta}^{(3)}$ is obtained from $\mathbf{\Theta}^{(2)}$ in the same way. For each precision matrix, we randomly draw $n$ samples from the Gaussian distribution with the corresponding precision matrix, where $n$ varies from 40 to 200 with a step of 20. We perform 500 replications for each $n$. For each $n$, $\lambda_2$ is fixed to 0.08, and $\lambda_1$ is adjusted to make sure that the edge number is about 200. The accuracy $n_d/n_g$ is used to measure the performance of FMGL and GLasso, where $n_d$ is the number of true edges detected by FGML and GLasso, and $n_g$ is the number of true edges. The results are shown in Figure~\ref{fig:syProb}. We can see from the figure that FMGL achieves higher accuracies, demonstrating the effectiveness of FMGL for learning multiple graphical models simultaneously.
\begin{figure}[ht]
\centering
\includegraphics[width=0.325\textwidth]{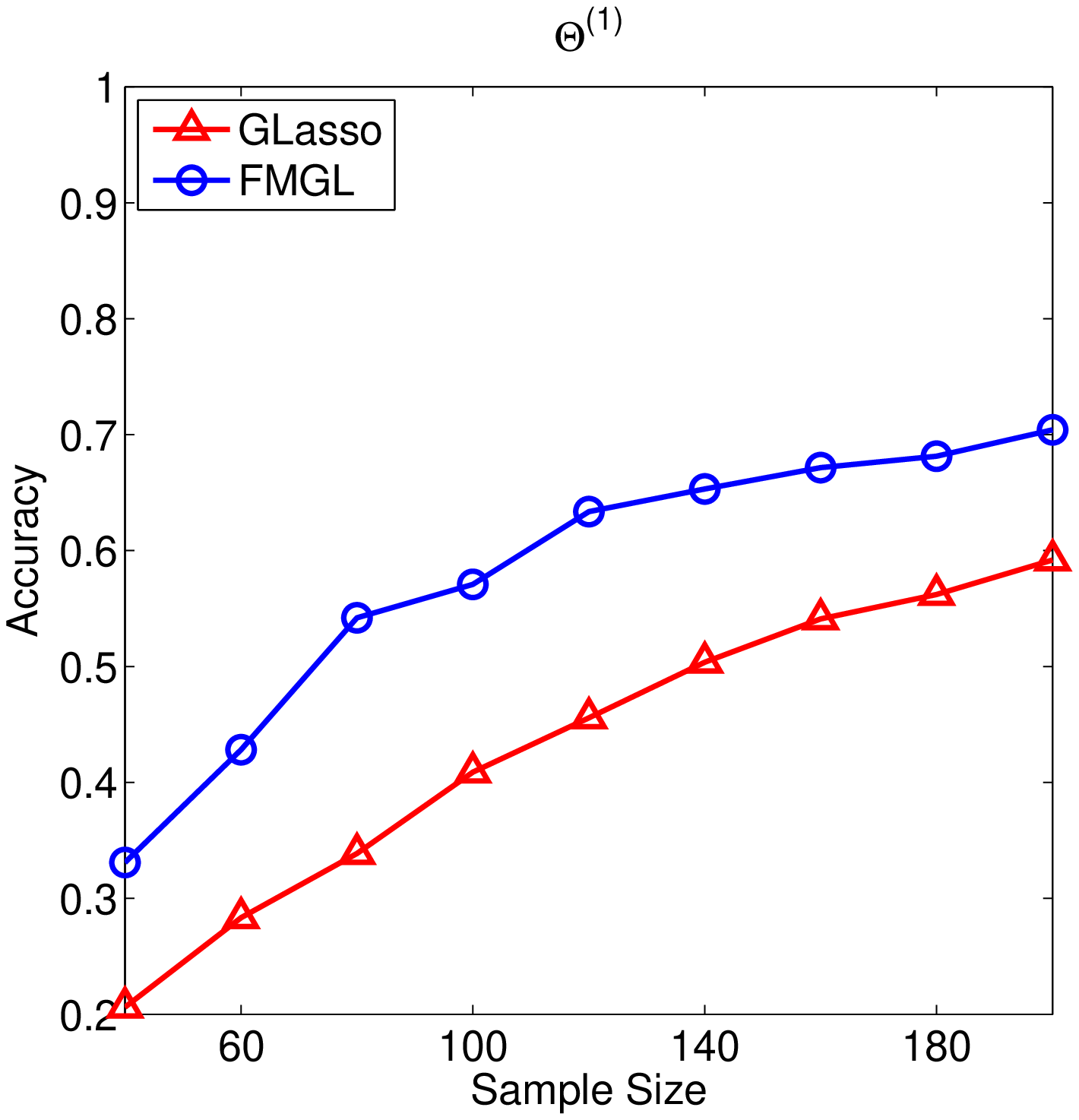}
\includegraphics[width=0.325\textwidth]{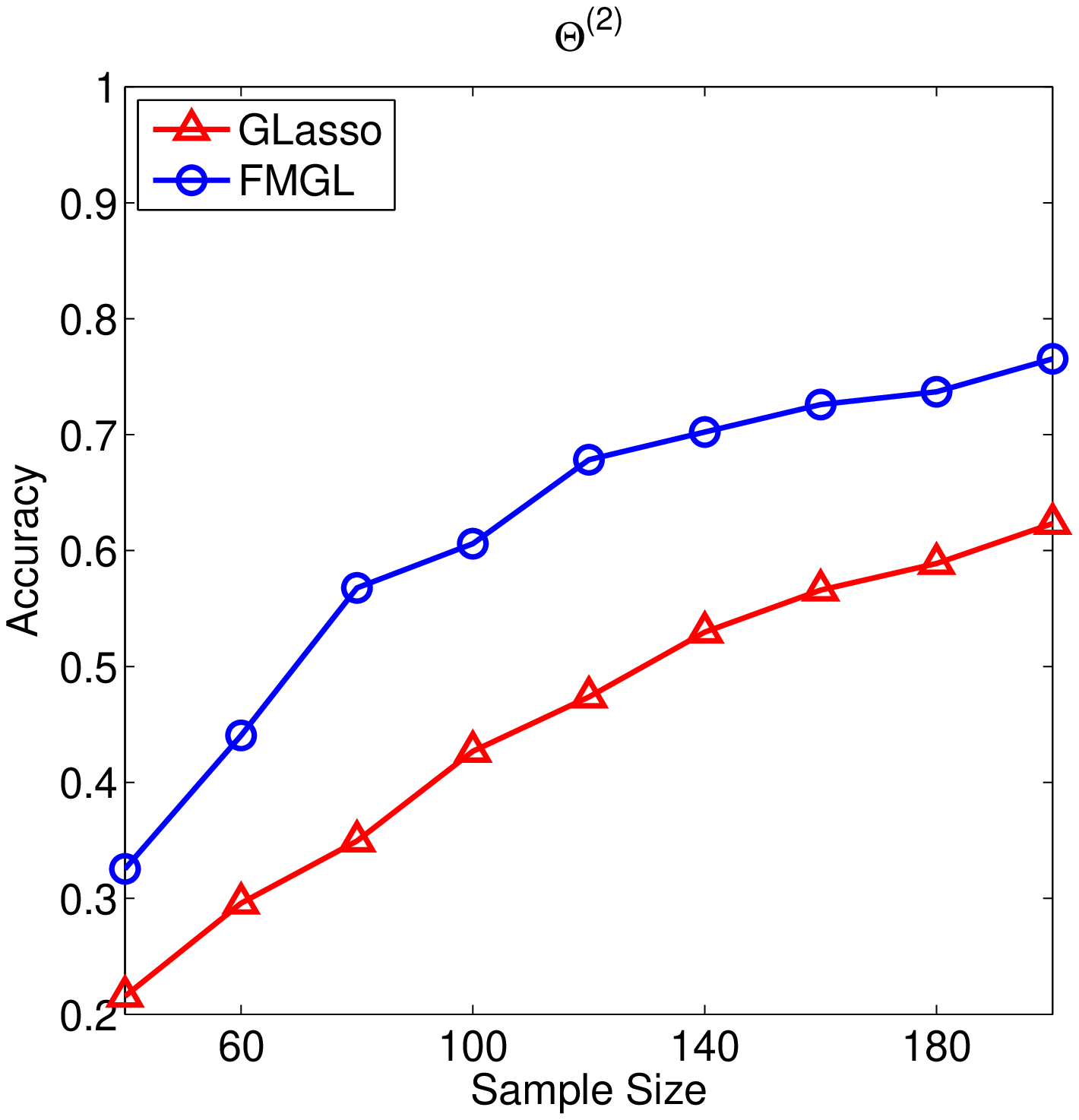}
\includegraphics[width=0.325\textwidth]{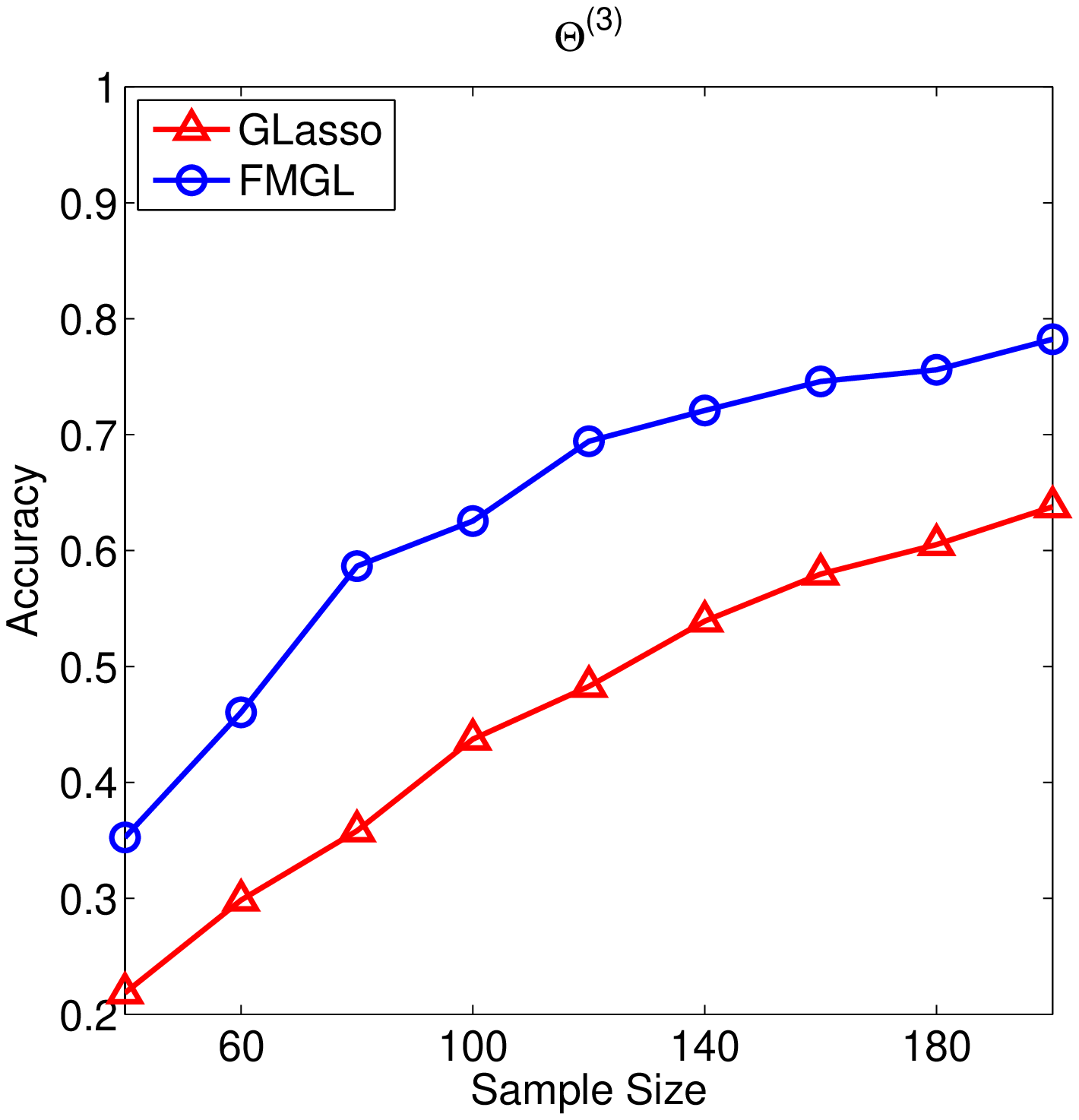}
\caption{Comparison of FMGL and GLasso in detecting true edges. {Sample size varies from 40 to 200 with a step of 20.}}\label{fig:syProb}
\vspace{-0.1in}
\end{figure}

\subsection{Real data}
\subsubsection{ADHD-200}
Attention Deficit Hyperactivity Disorder (ADHD) affects at least 5-10\% of school-age children with annual costs exceeding 36 billion/year in the United States. The ADHD-200 project has released resting-state functional magnetic resonance images (fMRI) of 491 typically developing children and 285 ADHD children, aiming to encourage the research on ADHD. The data used in this experiment is preprocessed using the NIAK pipeline, and downloaded from neurobureau\footnote{\url{http://www.nitrc.org/plugins/mwiki/index.php?title=neurobureau:NIAKPipeline/}}. More details about the preprocessing strategy can be found in the same website. The dataset we choose includes 116 typically developing children (TDC), 29 ADHD-Combined (ADHD-C), and 49 ADHD-Inattentive (ADHD-I). There are 231 time series and 2834 brain regions for each subject. We want to estimate the graphs of the three groups simultaneously. The sample covariance matrix is computed using all data from the same group. Since the number of brain regions $p$ is 2834, obtaining the precision matrices is computationally intensive. We use this data to test the effectiveness of the proposed screening rule.
$\lambda_1$ and $\lambda_2$ are set to 0.6 and 0.015. The convergence criterion is 1e-5. The comparison of FMGL and ADMM in terms of the objective value curve is shown in Figure~\ref{fig:adhdobjc}. The result shows that FMGL converges much faster than ADMM. The computational times of FMGL and ADMM are 1557.08 and 8306.35 seconds respectively. However, utilizing the screening, the computational times of FMGL-S and ADMM-S are 18.08 and 119.23 seconds respectively, demonstrating the superiority of the screening rule. The obtained solution has 1443 blocks. The largest one including 634 nodes is shown in Figure~\ref{fig:subgraph}.

\begin{figure}[h!]
\centering
\includegraphics[width=0.45\textwidth]{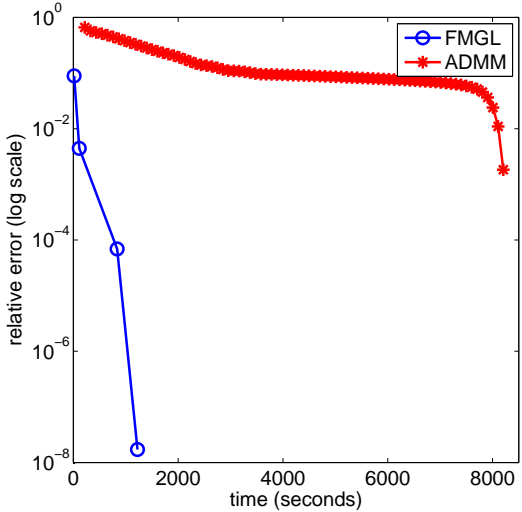}
\caption{{Comparison of FMGL and ADMM in terms of objective value curve on the ADHD-200 dataset. The dimension $p$ is 2834, and the number of graphs $K$ is 3.}}\label{fig:adhdobjc}
\end{figure}

\begin{figure}[h!]
\centering
\includegraphics[width=0.90\textwidth]{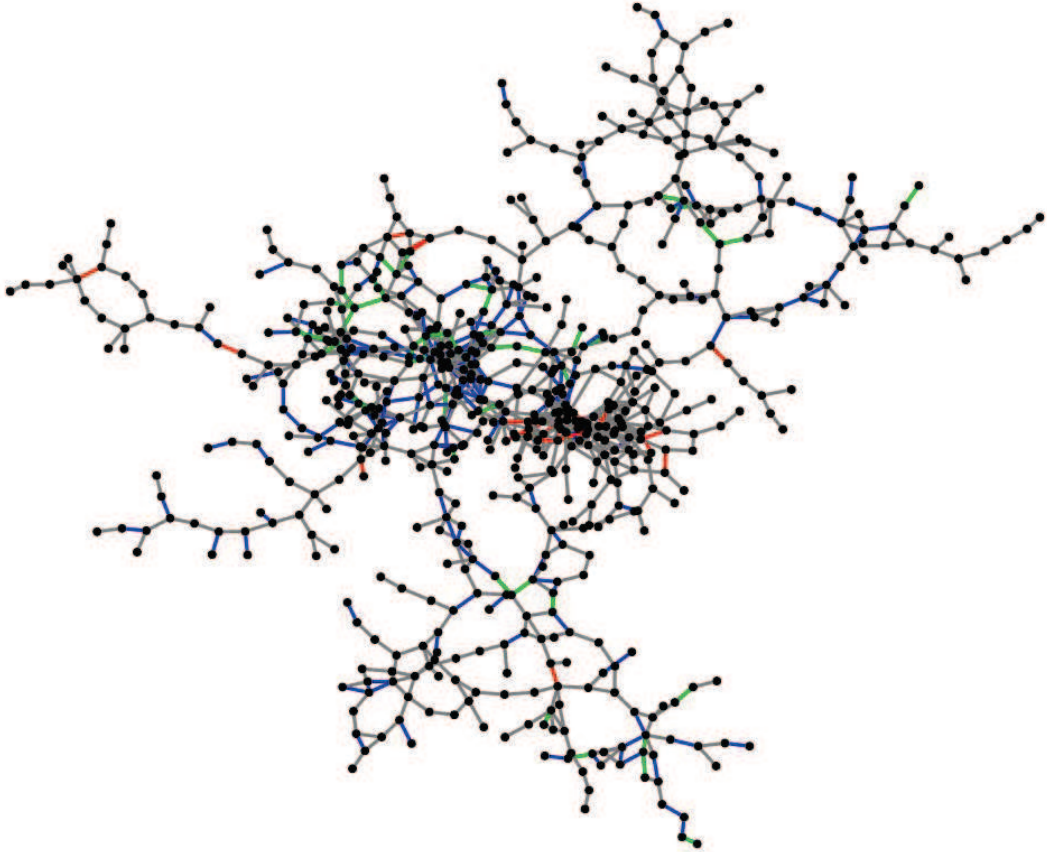}
\caption{A subgraph of ADHD-200 identified by FMGL with the proposed screening rule. The grey edges are common edges among the three graphs; the red, green, and blue edges are the specific edges for TDC, ADHD-I, and ADHD-C respectively.}\label{fig:subgraph}
\end{figure}

\begin{figure}[ht!]
\centering
\subfigure[]{
\includegraphics[width=0.45\textwidth]{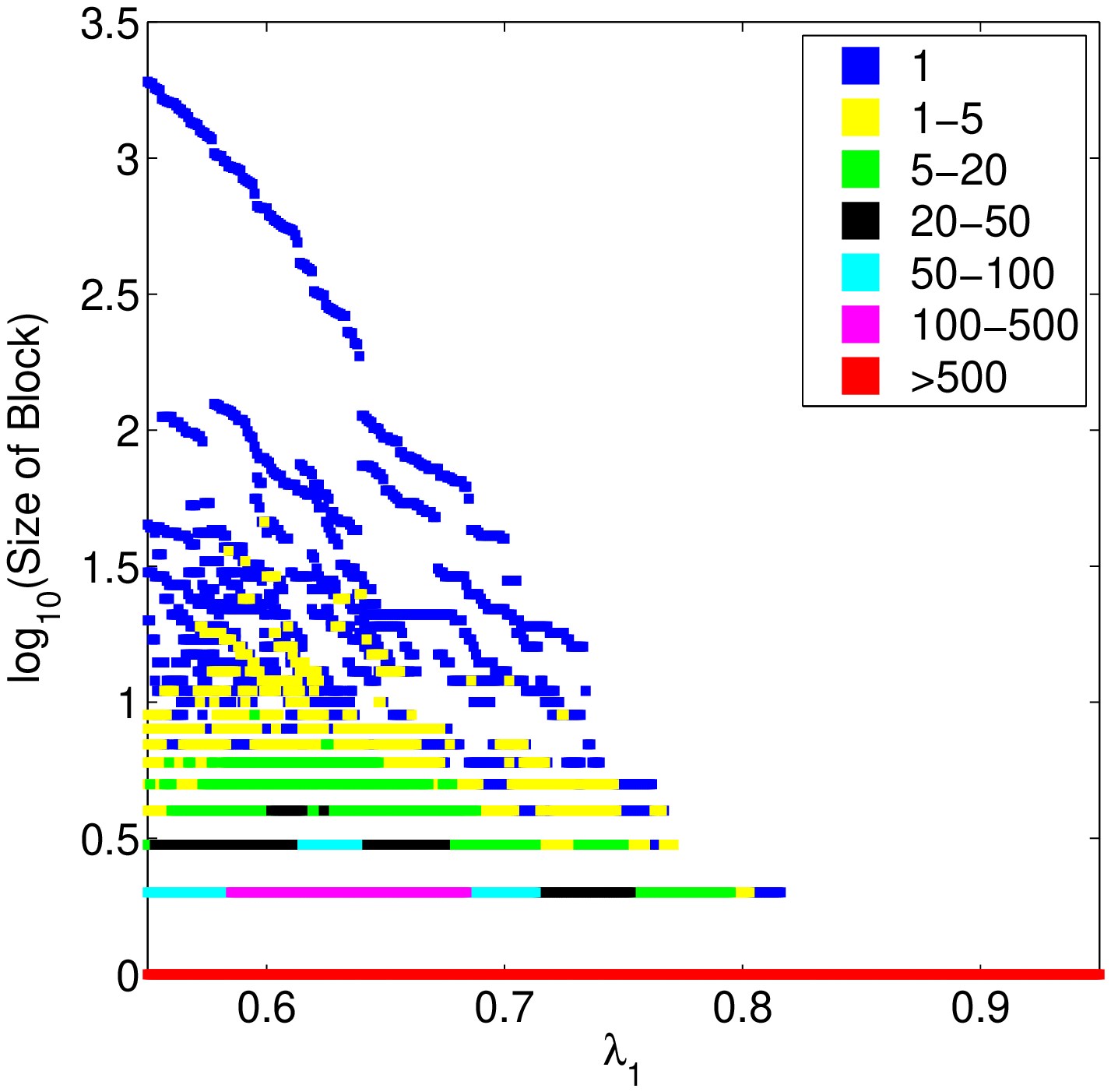}\label{fig:connect:f1} 
}
\subfigure[]{
\includegraphics[width=0.457\textwidth]{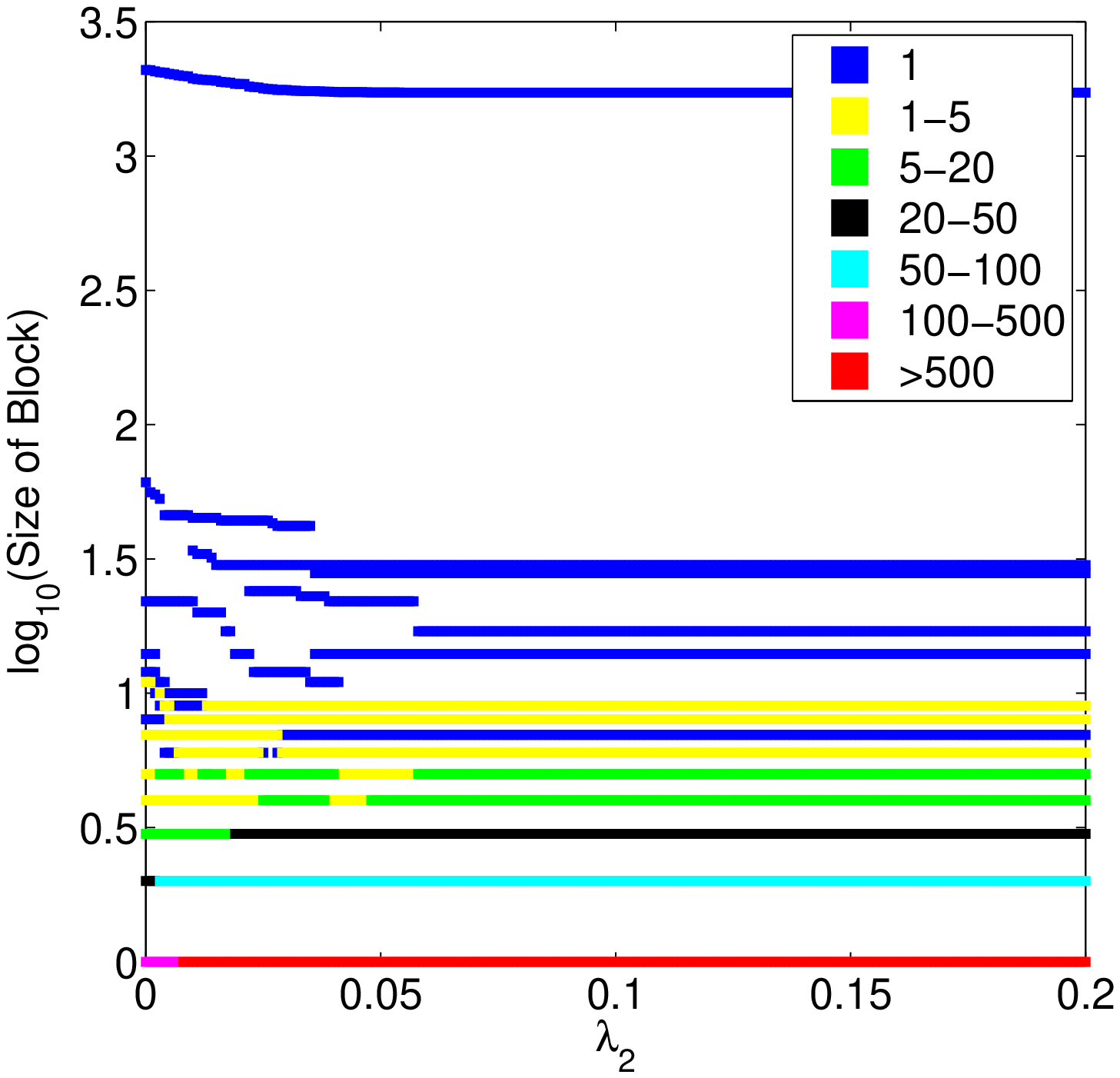}
\label{fig:connect:f2}
}
\caption{The size distribution of blocks (in the logarithmic scale) identified by the proposed screening rule. The color represents the number of blocks of a specified size. (a): $\lambda_1$ varies from 0.5 to 0.95 with $\lambda_2$ fixed to 0.015. (b): $\lambda_2$ varies from 0 to 0.2 with $\lambda_1$ fixed to 0.55.}\label{fig:connect}
\end{figure}

The block structures of the FMGL solution are the same as those identified by the screening rule. The screening rule can be used to analyze the rough structures of the graphs. The cost of identifying blocks using the screening rule is negligible compared to that of estimating the graphs. For high-dimensional data such as ADHD-200, it is practical to use the screening rule to identify the block structure before estimating the large graphs. We use the screening rule to identify block structures on ADHD-200 data with varying $\lambda_1$ and $\lambda_2$. The size distribution is shown in Figure~\ref{fig:connect}. We can observe that the number of blocks increases, and the size of blocks deceases when the regularization parameter value increases.

\subsubsection{FDG-PET}
In this experiment, we use FDG-PET images from 74 Alzhei-mer's disease (AD), 172 mild cognitive impairment (MCI), and 81 normal control (NC) subjects downloaded from the Alzheimer's disease neuroimaging initiative (ADNI) database. The different regions of the whole brain volume can be represented by 116 anatomical volumes of interest (AVOI), defined by Automated Anatomical Labeling (AAL)~\cite{tzourio2002automated}. Then we extracted data from each of the 116 AVOIs, and derived the average of each AVOI for each subject. The 116 AVOIs can be categorized into 10 groups: prefrontal lobe, other parts of the frontal lobe, parietal lobe, occipital lobe, thalamus, insula, temporal lobe, corpus striatum, cerebellum, and vermis. More details about the categories can be found in \cite{tzourio2002automated,wang2007altered}. We remove two small groups (thalamus and insula) containing only 4 AVOIs in our experiments.

To examine whether FMGL can effectively utilize the information of common structures, we randomly select $g$ percent samples from each group, where $g$ varies from 20 to 100 with a step size of 10. For each $g$, $\lambda_2$ is fixed to 0.1, and $\lambda_1$ is adjusted to make sure the number of edges in each group is about the same. We perform 500 replications for each $g$. The edges with probability larger than 0.85 are considered as stable edges. The results showing the numbers of stable edges are summarized in Figure~\ref{fig:probt}. We can observe that FMGL is more stable than GLasso. When the sample size is too small (say 20\%), there are only 20 stable edges in the graph of NC obtained by GLasso. But the graph of NC obtained by FMGL still has about 140 stable edges, illustrating the superiority of FMGL in stability.
\begin{figure}[h!]
\centering
\hspace{-1mm}\includegraphics[width=0.33\textwidth]{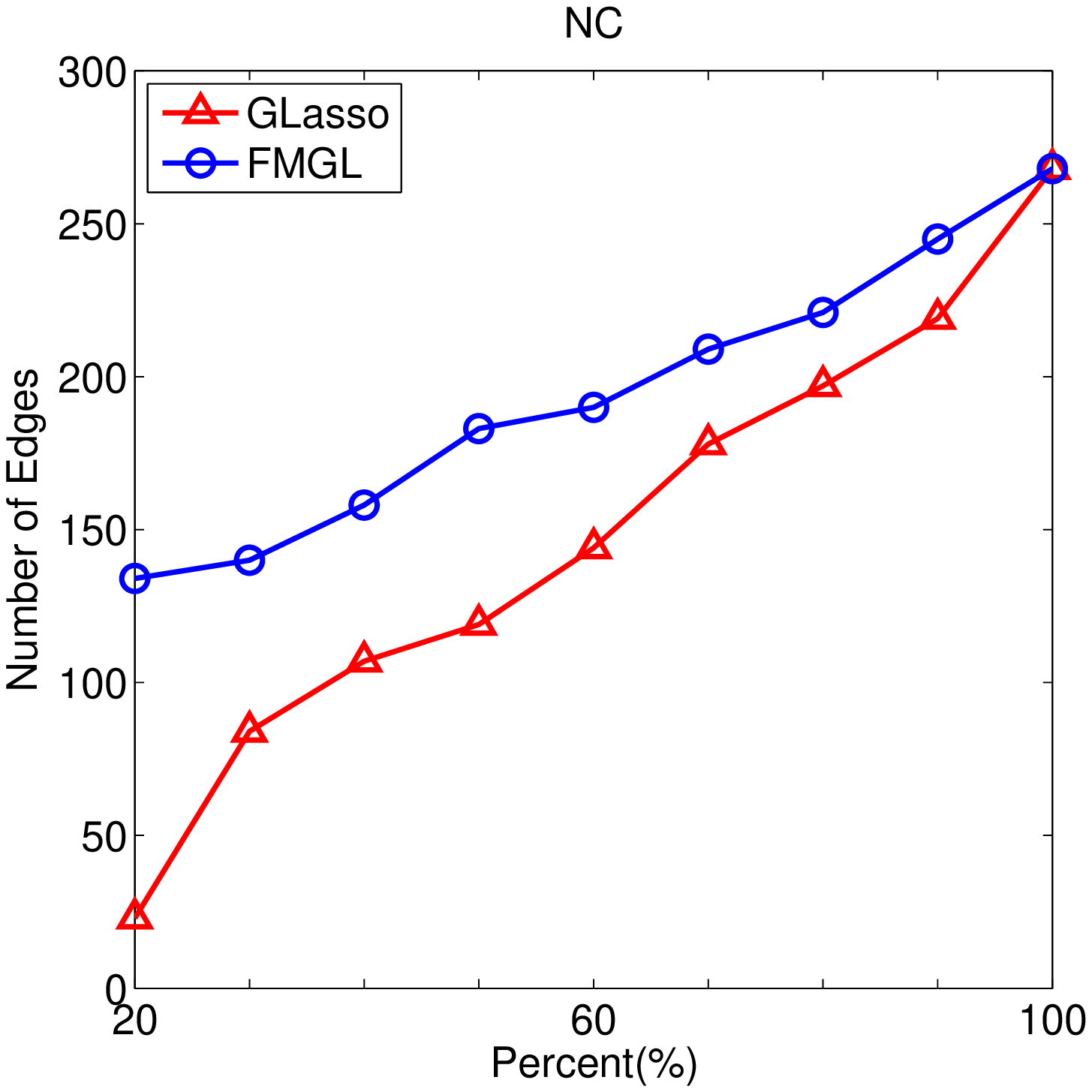}
\hspace{-2mm}
\includegraphics[width=0.33\textwidth]{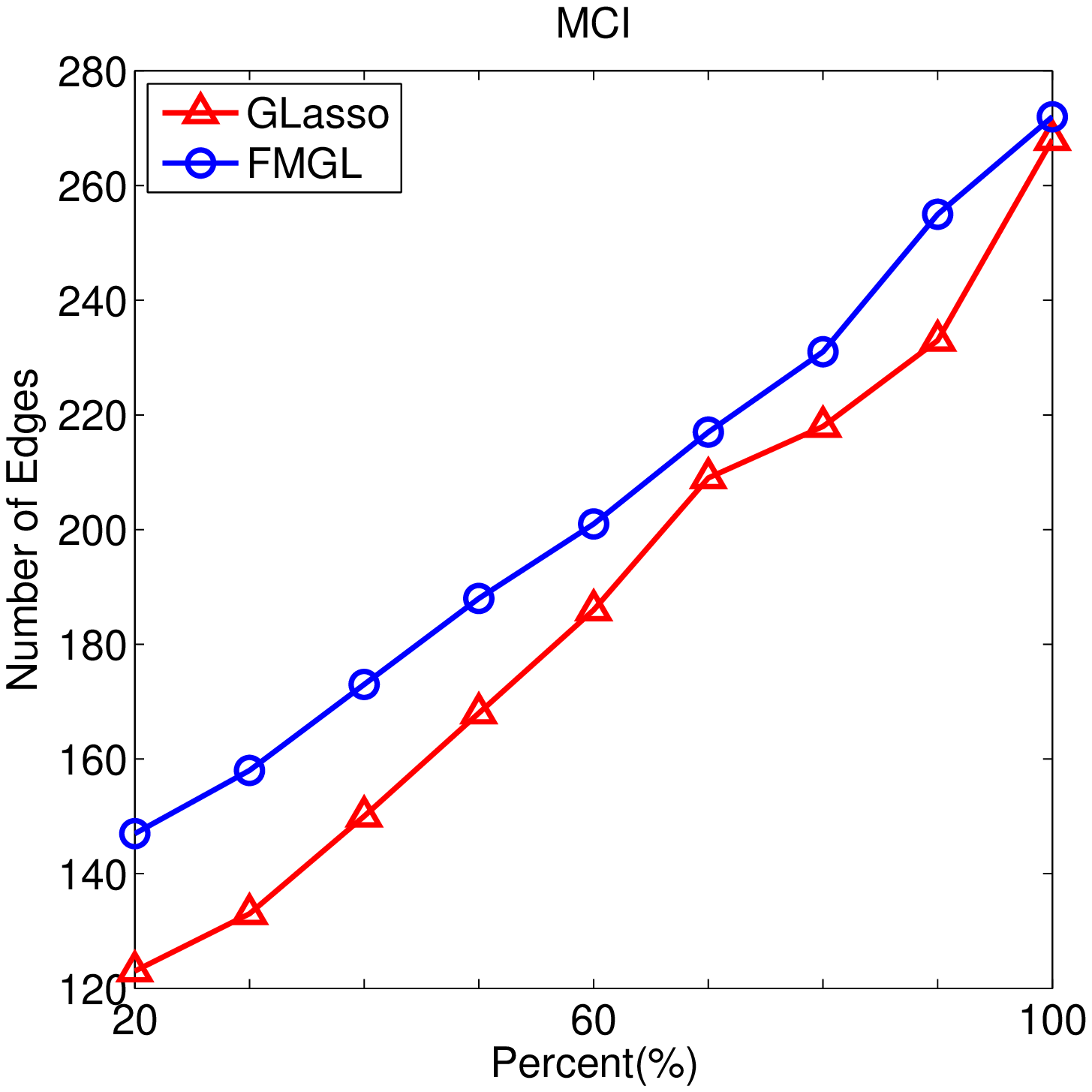}
\hspace{-2mm}
\includegraphics[width=0.33\textwidth]{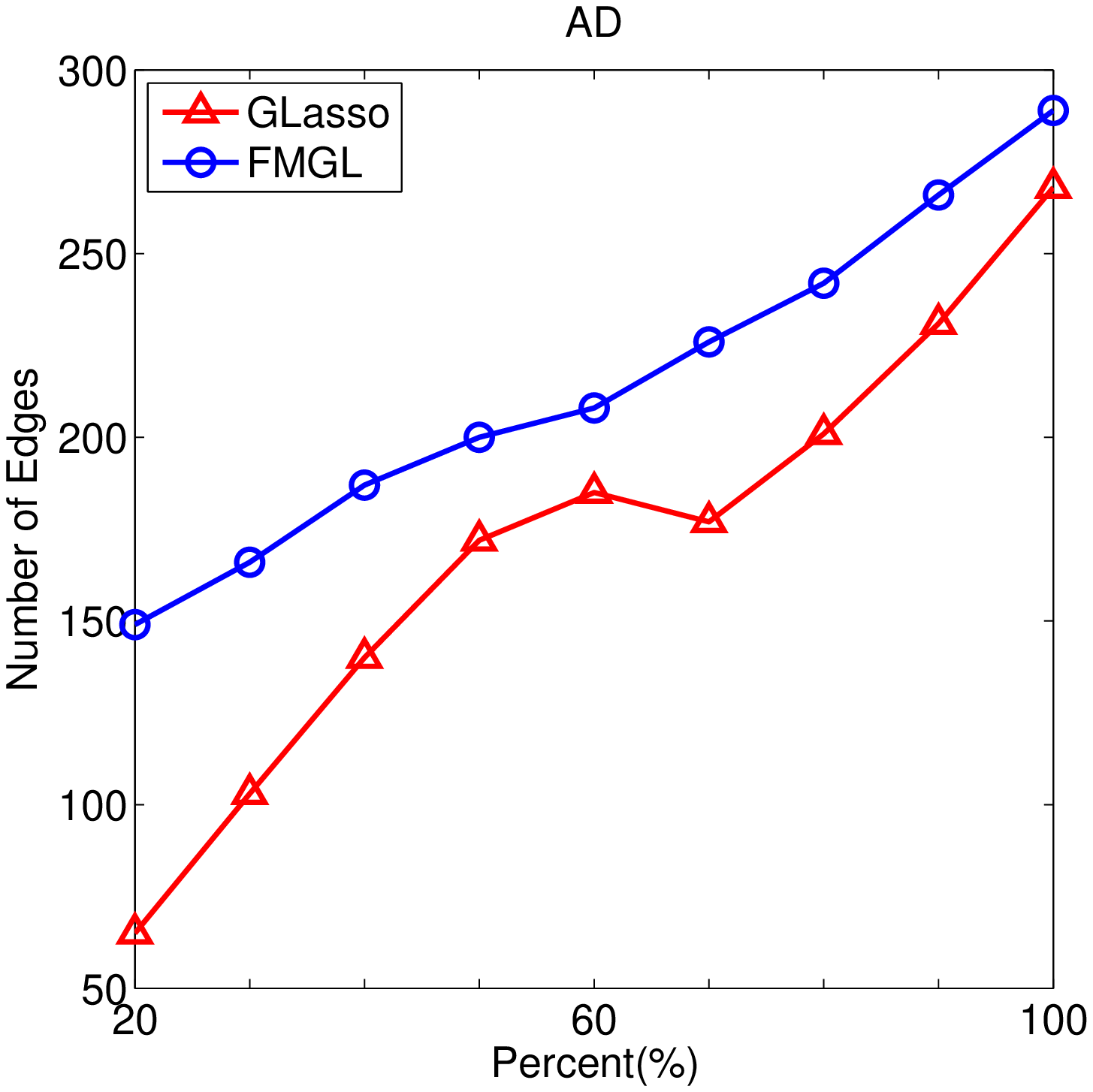}
\caption{The average number of stable edges detected by FMGL and GLasso in NC, MCI, and AD of 500 replications. Sample size varies from 20\% to 100\% with a step of 10\%.}\label{fig:probt}
\end{figure}
\begin{figure}[ht!]
\centering
\includegraphics[width=0.32\textwidth]{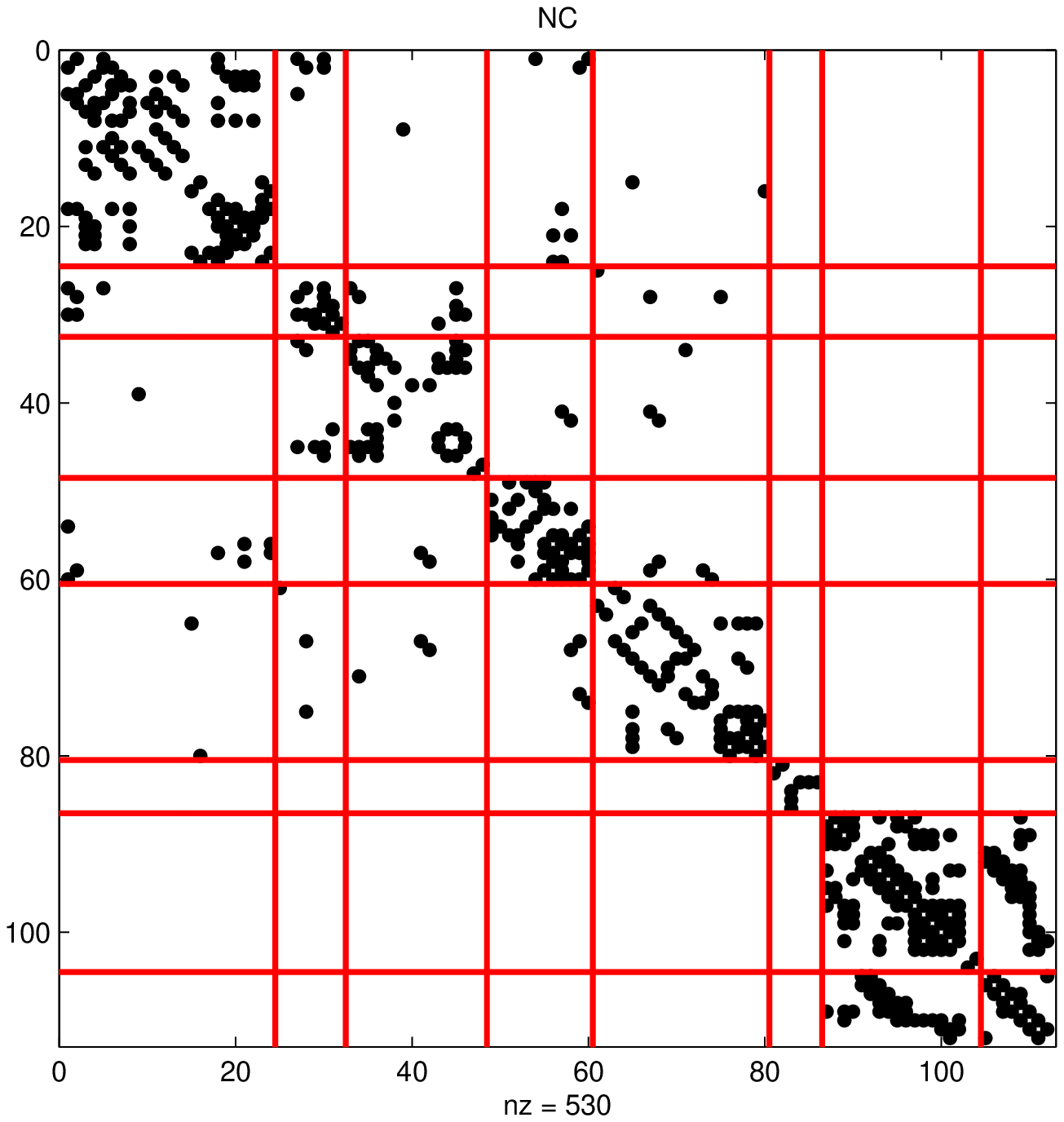}
\includegraphics[width=0.32\textwidth]{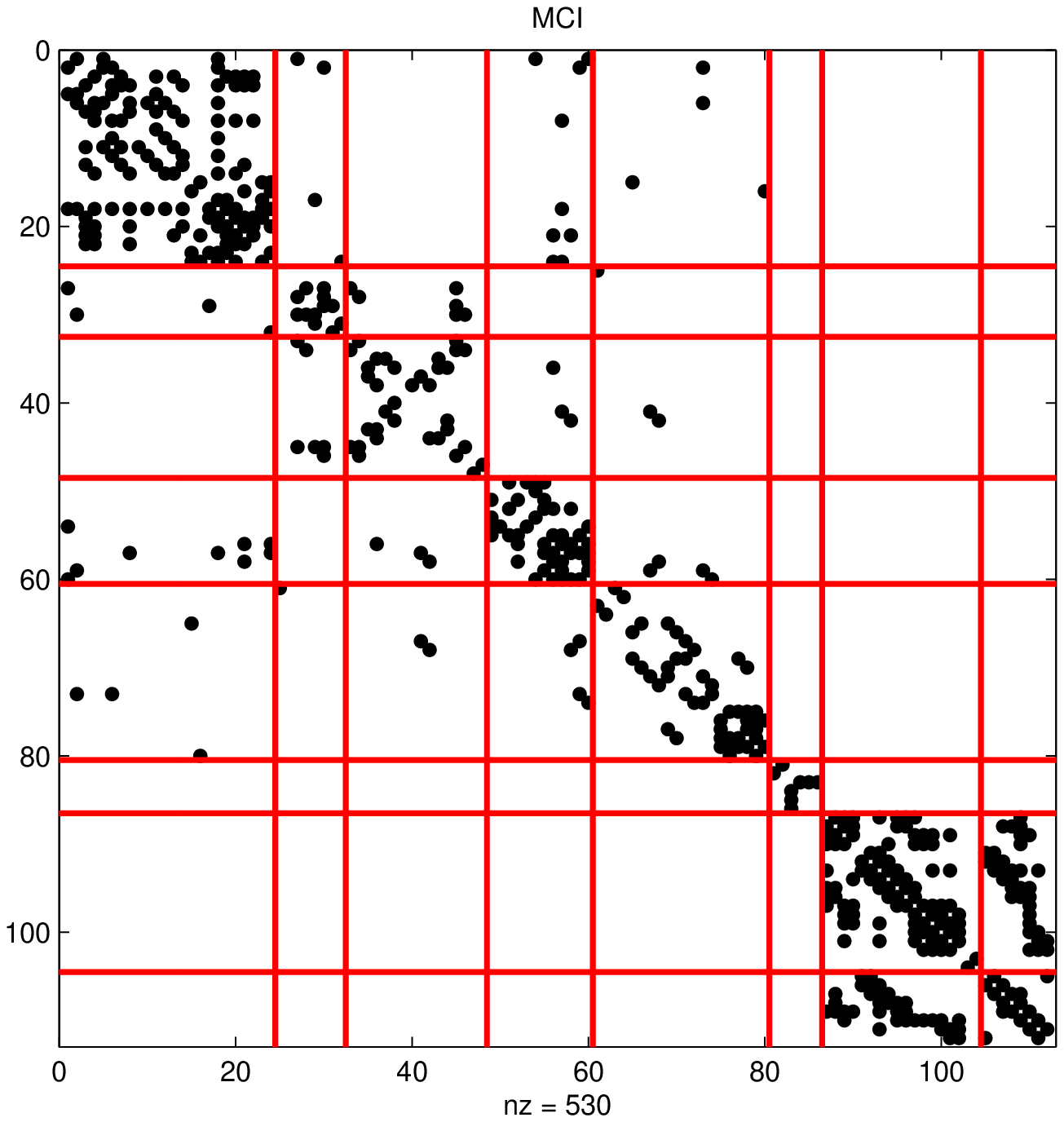}
\includegraphics[width=0.32\textwidth]{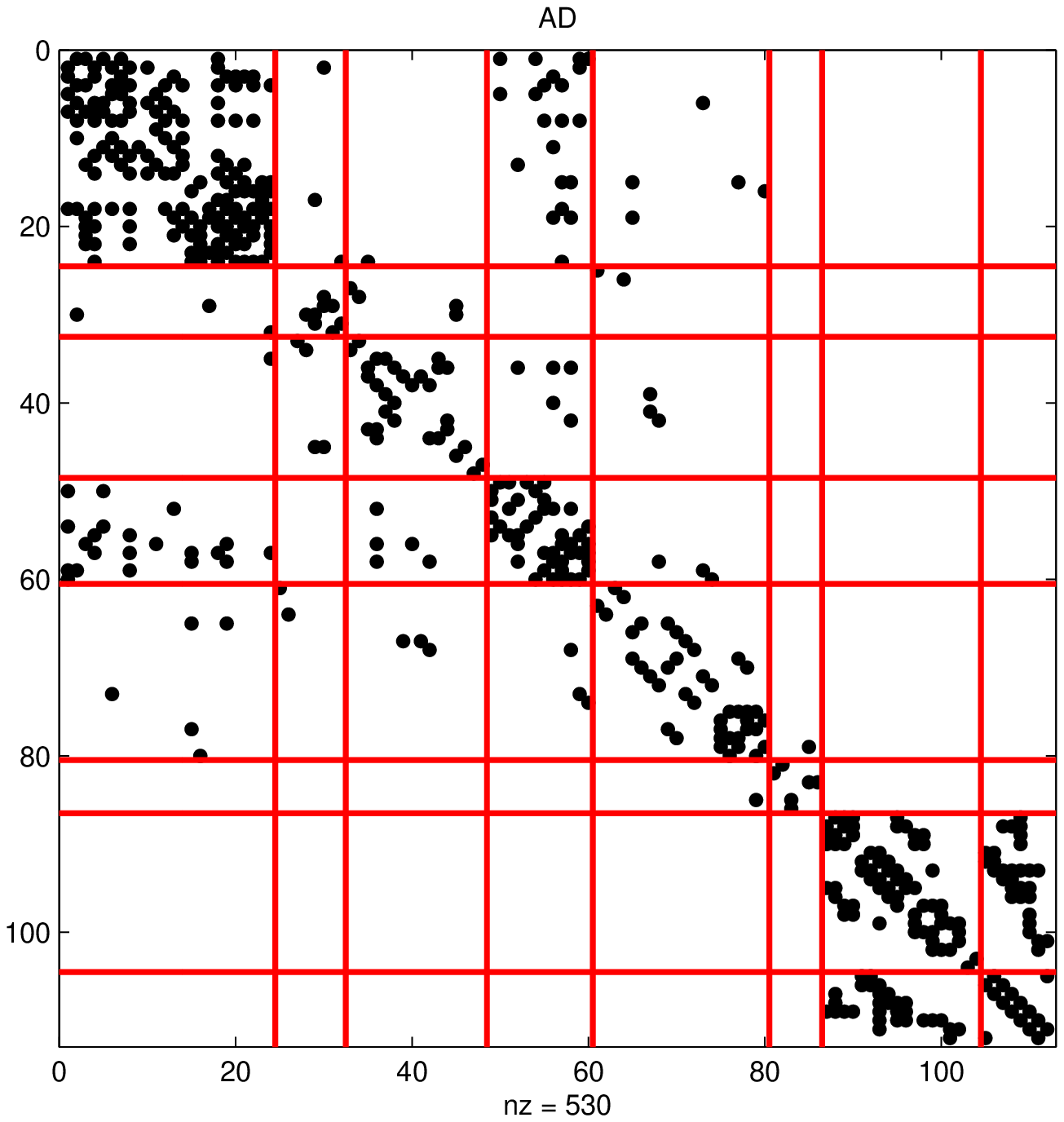}
\caption{Brain connection models with 265 edges: NC, MCI, and AD. In each figure, the diagonal blocks are prefrontal lobe, other parts of frontal lobe, parietal lobe, occipital lobe, temporal lobe, corpus striatum, cerebellum, and vermis respectively.}\label{fig:ADNI}
\end{figure}

The brain connectivity models obtained by FMGL are shown in Figure~\ref{fig:ADNI}. We can see that the number of connections within the prefrontal lobe significantly increases, and the number of connections within the temporal lobe significantly decreases from NC to AD, which are supported by previous literatures~\cite{azari1992patterns,horwitz1987intercorrelations}. The connections between the prefrontal and occipital lobes increase from NC to AD, and connections within cerebellum decrease. We can also find that the adjacent graphs are similar, indicating that FMGL can identify the common structures, but also keep the meaningful differences.

\section{Conclusion}\label{sec:discussion}
In this paper, we consider simultaneously estimating multiple graphical models by maximizing a fused penalized log likelihood. We have derived a set of necessary and sufficient conditions for the FMGL solution to be block diagonal for an arbitrary number of graphs. A screening rule has been developed to enable the efficient estimation of large multiple graphs. The second-order method is employed to solve the fused multiple graphical lasso. The global convergence of the proposed method is guaranteed, and the convergence rate is local quadratic. A shrinking scheme is proposed to identify the variables to be updated during the Newton iterations, thus reduces the computation. Numerical experiments on synthetic and real data demonstrate the efficiency and effectiveness of the proposed method and the screening rule. We plan to explore the convergence properties of the second-order method using the inexact Newton direction. Due to the shrinking scheme, the proposed second-order method is suitable for warm-start techniques. A good initial solution can further speedup the computation. As part of the future work, we plan to explore how to efficiently find a good initial solution to further improve the efficiency of the proposed method. One possibility is to use divide-and-conquer techniques~\cite{hsieh2012divide}.
\bibliographystyle{siam}	
\small{\bibliography{nips2012}}
\end{document}